%% file: neurips_2026.tex
\documentclass{article}

\usepackage[preprint]{neurips_2026}

\usepackage[utf8]{inputenc} % allow utf-8 input
\usepackage[T1]{fontenc}    % use 8-bit T1 fonts
\usepackage{hyperref}       % hyperlinks
\usepackage{url}            % simple URL typesetting
\usepackage{booktabs}       % professional-quality tables
\usepackage{amsfonts}       % blackboard math symbols
\usepackage{nicefrac}       % compact symbols for 1/2, etc.
\usepackage{microtype}      % microtypography
\usepackage{xcolor}         % colors
\usepackage{amsmath}
\usepackage{mathtools}
\usepackage{amssymb}
\usepackage{amsthm}
\usepackage{todonotes}
\usepackage{algorithmic}
\usepackage{algorithm}
\usepackage{booktabs}
\usepackage{multirow}
\usepackage[normalem]{ulem}
\useunder{\uline}{\ul}{}
\usepackage{tcolorbox}
\usepackage{enumitem}
\usepackage{caption}

\theoremstyle{plain}
\newtheorem{theorem}{Theorem}[section]

\newtheorem{lemma}[theorem]{Lemma}
\newtheorem{corollary}[theorem]{Corollary}
\theoremstyle{definition}

\newtheorem{assumption}[theorem]{Assumption}
\theoremstyle{remark}
\newtheorem{remark}[theorem]{Remark}

\tcolorboxenvironment{admm}{
  colback=white,
  sharp corners,
  boxrule=0.5pt,
  }

\title{ASTRA: ADMM-Accelerated Topology Reconfiguration for Dynamic Satellite Constellations 
}

\author{%
  Jo\~{a}o Norberto \\
  Department of Computer Science \\
  NOVA School of Science and Technology\\
  Caparica, Portugal\\
  \texttt{jd.norberto@campus.fct.unl.pt} \\
  \And
  Ricardo N. Ferreira \\
  Department of Computer Science \\
  NOVA School of Science and Technology\\
  Caparica, Portugal\\
  \texttt{rjn.ferreira@campus.fct.unl.pt} \\
  \And
  Cl\'{a}udia Soares\\
  Department of Computer Science \\
  NOVA School of Science and Technology\\
  Caparica, Portugal\\
  \texttt{claudia.soares@fct.unl.pt} \\
}

\begin{document}

\maketitle

\begin{abstract}
Dynamic topology reconfiguration is central to the reliability and efficiency of large satellite constellations, yet many existing approaches rely on idealized assumptions such as full constellation deployment or uniform orbital spacing. We present \textbf{Adaptive Satellite Topology via Regret-Aware learning (ASTRA)}, a theoretically-grounded framework for dynamic satellite topology reconfiguration that builds on an \textbf{online learning formulation} and makes it computationally practical. ASTRA combines an ADMM-based offline solver with efficient online updates for both online gradient descent and online conditional gradient, yielding markedly cheaper constrained updates than generic optimization pipelines. On the theory side, we show that for a relevant class of entry-wise nonzero utility matrices, the objective is strongly convex, which yields logarithmic static regret for online gradient descent, and we further instantiate known dynamic-regret guarantees under inexact ADMM inner loops. Empirically, ASTRA matches or improves topology quality, presenting a good trade-off with computational time on synthetic constellations, and it remains effective on real Starlink data under partial deployment and non-uniform spacing, where idealized structural assumptions break down. These results position ASTRA as an efficient and theoretically grounded approach to topology reconfiguration in realistic Low Earth Orbit networks.

\end{abstract}

\section{Introduction \label{sec:intro}}

Satellite technology has advanced significantly, with modern satellites being used for a plethora of tasks such as weather forecasting~\cite{hertzfeld2004weather} and internet provision~\cite{shaengchart2023starlink, de2015satellite, chen2024concept}. Typically, satellites are organized in constellations and are divided into orbital planes, containing a variable number of satellites, depending on the task at hand. E.g., satellites in geostationary orbit allow for continuous observation of the same geographic region, while satellites in Low Earth Orbit (LEO) are suitable for real-time communication due to low propagation delays, high bandwidth, and higher throughput when compared with other orbital altitudes~\cite{papapetrou2007distributed, han2021dynamic}. Traditionally, for satellites to communicate with each other, they would follow a bent-pipe architecture. A satellite would send a packet to a ground station, which would then send the packet back to the destination satellite when possible. Since deploying a large number of ground stations is not possible due to both geographical and economic constraints, satellites could only communicate with each other when a ground station was visible. 

More recently, the scientific community has focused on the use of inter-satellite links (ISLs) to achieve near-persistent, direct connections with the ground stations. Establishing an ISL involves using the pointing acquisition and tracking (PAT) system. Overall, the use of ISLs allows satellites to communicate among themselves, without solely relying on ground stations. Thus, packets can be forwarded directly via ISLs, enhancing the quality of the provided services~\cite{wu2025enhancing}. The maintenance of satellite constellations is crucial to preserve these services. Their growth~\cite{9461407, kulu2024satellite}, with the limited lifespan of the satellites \cite{gonzalo2014challenge}, poses a challenge regarding the need to reconfigure their network every time a new satellite is added or removed. Furthermore, the increase in the number of orbiting objects and, consequently, the risk of collision between them, can potentially impact the consistency and quality of the services provided \cite{guo2025resilience}, leading to both social and economic consequences. It is important that the communication between satellites and the network's topology dynamically adapt to these changes, allowing for stable and more robust services. Nevertheless, a purely dynamic strategy, where satellites would acquire information about the environment and their neighbors every time a new packet needs to be sent, would not be desirable due to the impact it would have on performance. Furthermore, some approaches assume full network symmetry with both satellites and orbital planes uniformly distributed among each other and assume full constellation deployment. Thus, there is a need for an efficient topology configuration algorithm capable of coping with the dynamic behavior of satellite constellations, with minimal assumptions.

\paragraph*{Contributions.} 
Recent work~\cite{norberto2026online} formulated dynamic constellation topology reconfiguration as a convex optimization problem with flexible utility design, and showed that generic online methods such as OGD and OCG can be applied to adapt to time-varying settings. Our contribution is not a new problem formulation. Instead, we make that formulation substantially more practical and theoretically sharper through a problem-specific optimization treatment and stronger guarantees. Specifically, we:
\begin{itemize}[noitemsep,topsep=0pt,parsep=0pt,partopsep=0pt,wide=2pt]
\item \textbf{Derive a problem-structured ADMM solver for topology optimization of satellite networks.} We show that the topology optimization problem of Norberto et al.~\cite{norberto2026online} can be solved offline with an ADMM-based algorithm tailored to the Laplacian constraints, replacing generic solver calls by problem-specific updates.
 \item \textbf{Accelerate the online methods.} Building on the observation of Norberto et al.\ that OGD and OCG are applicable to this setting, we derive ADMM-based subroutines for their constrained updates, yielding substantially cheaper online iterations than generic optimization pipelines.
 \item \textbf{Establish stronger regret guarantees for this topology objective.} We prove that, for a useful class of entry-wise nonzero utility matrices, the objective is strongly convex. This implies logarithmic static regret for OGD against any fixed topology comparator, strengthening the theoretical understanding of online topology reconfiguration in this setting.
 \item \textbf{Connect the theory to the implemented inexact solver.} We further analyze dynamic regret under inexact ADMM inner loops via feasible-shadow arguments, and relate the regret degradation to infeasibility and ADMM primal residuals.
  \item \textbf{Validate beyond idealized constellation assumptions.} In contrast to classical topology-design methods that rely on full deployment or uniform orbital spacing, we show that the resulting framework remains effective on realistic Starlink data under partial deployment and non-uniform spacing. 
  On real Starlink data, ASTRA preserves the connectivity quality of SOTA and attains the largest number of fully connected topologies, while offering substantially lower computational cost.
\end{itemize}

\section{Related Work and Background \label{sec:related_work}}

Topology configuration methods can be used to define a topology at a certain instant, given the position of all satellites. Two main approaches emerge from the wide range of methods developed~\cite{xiaogang2016survey}: static approaches, which consider the topology fixed for a certain time interval, and dynamic approaches, which account for possible changes that can occur to satellites and their respective inter-satellite links (ISLs).

Static methods are divided into Virtual Topology (VT)~\cite{sun2020improved, lu2013virtual, 10436098}, and Virtual Node (VN)~\cite{ekici2002distributed, 10404740}. VT approaches divide the system period of the network into small enough \(n\) time slices, where the state of the satellite network is assumed stationary. Subsequently, a substantial number of time slices can be generated. Thus, these approaches deal with the trade-off between storage capacity and performance, as satellites have limited storage. VN approaches assume a network of logical satellite locations, each attributed to the closest real satellite. With the network state fixed, strategies to establish ISLs are used. Currently, +Grid is the most commonly used approach due to its simplicity and runtime. +Grid connects each satellite with its four closest neighbors, where two are from the same orbital plane and two from adjacent orbital planes, assuming that satellites and orbital planes are uniformly separated. Recent works have tried to improve on this method, such as \(\times\)Grid~\cite{mclaughlin2023grid} and Motif~\cite{Network-topology-design-at-27k-km/hour}. \(\times\)Grid~\cite{mclaughlin2023grid} proposes a location-oriented design, where ISLs are established to allow for handovers to occur between connected satellites, thus simplifying them. Motif~\cite{Network-topology-design-at-27k-km/hour} exploits repetitive patterns in satellite networks, reducing traffic and latency. This method can be seen as a generalization of +Grid. Static methods simplify the topology of satellite networks, but do not account for the inherent dynamic behavior of satellite networks. Thus, unexpected changes in the topology can lead to link failures and congestion.

Alternatively, dynamic approaches~\cite{papapetrou2007distributed, leyva2021inter, ron2025time, 8688478} allow satellites to acquire information about neighbors and the state of the network to improve the reliability of the established ISLs. Papapetrou et al.~\cite{papapetrou2007distributed} proposed an on-demand protocol to route packets through valid ISLs. It relies on message flooding to discover valid routes, thus raising concerns regarding the PAT time and energy consumption. Leyva-Mayorga et al.~\cite{leyva2021inter} proposed a many-to-many maximum weighted matching problem, where the set of feasible pairs of satellites that maximizes the sum of signal-to-noise ratio is selected. Ron et al.~\cite{ron2025time} formulate the problem of ISL establishment as an optimization problem, where network capacity is maximized while minimizing link churn and latency. To solve this optimization problem, the DoTD algorithm is proposed. Li et al~\cite{8688478} propose an algorithm that allows satellites to collect information about alive neighbors, ensuring that packets are routed through possible ISLs. As the PAT time is in the order of seconds~\cite{wu2025enhancing, bhattacharjee2023laser, ali2026leo}, and satellites have limited battery and processing, a purely dynamic strategy, where satellites would acquire information about the environment and their neighbors every time a new packet needs to be sent, would not be desirable. Hence, there needs to be a trade-off between the performance of these methods and adaptation to the dynamic behavior of the network's topology. Recently, similar to~\cite{ron2025time}, Norberto et al.~\cite{norberto2026online} formulated the network topology configuration problem as a convex optimization problem, where the topology of the network is modeled as a Laplacian matrix \(X\), guided by a utility matrix that can be arbitrarily defined to meet any desired properties and/or requirements of the operators (more details in Section~\ref{sec:opt-problem-formulation}). The proposed general formulation allows for broad scope and flexibility for different types of missions. In addition, the authors show that online algorithms can be easily applied to the derived optimization problem, allowing for automatic adaptation. Although not named by the authors, for convenience we will denote this method by GUTO (General Utility Topology Optimization). In this paper, we build upon their work and present an efficient implementation, allowing us to deal with bigger constellations and derive theoretical guarantees, thus bridging the gap between theory and application.

%\section{Background}
%\label{sec:background}

\paragraph*{Online Convex Optimization. \label{sec:oco_introd}}
We model dynamic topology reconfiguration within the framework of online convex optimization (OCO)~\cite{cesa2006prediction,hazan2016introduction,orabona2019modern}. This setting has been vastly studied~\citep {zinkevich2003online,bubeck2012regret,hazan2012projection,zhang2018adaptive,hazan2020faster,zhao2020dynamic,hazan2021boosting,gatmiry2023projection}, demonstrating applicability in a vast number of different problems, such as portfolio selection~\citep{cover1991portfolio,hazan2007logarithmic}, stochastic optimization~\citep{duchi2011adaptive,hazan2011stochastic}, and control~\citep{cassel2022control,gradu2023control,nonhoff2026control}. At each round $t$, the learner selects a feasible decision $x_t \in \mathcal{K}$, after which a convex loss $f_t$ is revealed. Performance is measured by the regret, $\mathrm{Regret}_T = \sum_{t=1}^T f_t(x_t) - \sum_{t=1}^T f_t(u_t)$, where $u_t$ denotes a comparator decision sequence. When the comparator is the best fixed decision in hindsight, this yields \emph{static regret}; when the comparator may vary over time, this yields \emph{dynamic regret}. Two standard algorithmic families are especially relevant here: projection-based methods such as Online Gradient Descent (OGD), and projection-free methods such as Online Conditional Gradient (OCG)~\cite{hazan2016introduction}. Norberto et al.~\cite{norberto2026online} showed that both can be applied to topology reconfiguration under the GUTO formulation. In our setting, Theorem~\ref{thm:strong-convexity-proof} shows that, for a useful class of utility matrices, the losses are strongly convex; therefore, by the standard OGD guarantee for strongly convex losses~\cite[Theorem~3.3]{hazan2016introduction}, ASTRA-OGD attains logarithmic static regret. We defer a brief tutorial on the OCO framework, including the standard distinction between static and dynamic regret, to Appendix~\ref{sec:oco_appendix}.

\paragraph*{Alternating Direction Method of Multipliers (ADMM). \label{sec:admm_introduction}}
We use ADMM as the optimization backbone of ASTRA, both for the offline solver and for the constrained subroutines used by the online methods. Since our contribution is not a new variant of ADMM itself, we defer a brief tutorial on the generic ADMM framework to Appendix~\ref{sec:ADMM} and focus in the main text on the problem-specific splitting induced by the Laplacian and box constraints.

\section{ASTRA: Adaptive Satellite Topology via Regret-Aware learning}
\label{sec:astra-section}

In this section, we present the main features of our novel framework ASTRA. We start by summarizing GUTO, the optimization problem defined in~\cite{norberto2026online}, and then show that for a specific class of utility matrices, the optimization problem becomes strongly convex. Posteriorly, we present our ADMM-based efficient implementation to solve GUTO, and also show that ADMM can be applied to the online learning algorithms (OGD and OCG), allowing for even faster solutions.

\subsection{General Utility Topology Optimization (GUTO)}
\label{sec:opt-problem-formulation}

The optimization problem proposed in~\cite{norberto2026online} can be compacted into the form
\begin{equation}
    \label{eq:net-top-optimization-problem}
    \begin{aligned}
        \underset{X \in \mathbb{R}^{n \times n}}{\text{minimize}} \quad 
        & \frac{1}{2}\|(X-P)\odot U\|^2_F - \textbf{Tr}(\Lambda\odot X) \\
        \text{subject to} \quad 
        & X = X^T , \quad X \mathbf{1} = \mathbf{0}, \quad B^{\text{lower}}\le X \le B^{\text{upper}},
    \end{aligned}
\end{equation}
where \(X\) is the Laplacian matrix representing the topology of the network, $\Lambda$ is a diagonal matrix used to encourage connections between satellites, \(\|\cdot\|_F\) represents the Frobenius norm, \(\textbf{Tr}(\cdot)\) the trace operator, \(\odot\) the entry-wise product, and $\mathbf{0}$ and $\mathbf{1}$ denote the all-zeros and all-ones vector, respectively. The matrix $B^{\text{upper}} = \text{diag}(d_1^{\max}, \dots, d_n^{\max})$ is a diagonal matrix, where $d_i^{\max}$ denotes the maximum degree of each node/satellite, and $B^{\text{lower}}= I -\mathbf{1}\mathbf{1}^T$, where $I$ represents the identity matrix. These two matrices further constrained the structure of the Laplacian matrix, guaranteeing that the off-diagonal values are between $0$ and $-1$, and the diagonal values are between zero and the allowed maximum degree of each node (i.e., the maximum number of connections the satellite can establish). The connectivity matrix \(P \in \mathbb{R}^{n \times n}\) represents the set of feasible connections a satellite can have. If the connection between satellite \(i\) and satellite \(j\) is feasible, then \(P_{ij}=-1\), otherwise, \(P_{ij}=0\). In this work, to define which connections are feasible, we employ the two approaches to approximate the field of view (FOV) of a satellite introduced in~\cite{norberto2026online} and limit the allowed maximum distance for a connection to be established by $8,000$ km, as ISL properties are directly affected by distance~\cite{ali2026leo}. The utility matrix \(U \in \mathbb{R}^{n \times n}\) individually assigns different weights to each possible ISL. These utilities can be constructed based on different metrics, such as packet drop probability~\cite{taleb2009}, network capacity~\cite{ron2025time}, energy consumption on packet forwarding~\cite{10436098} or structural preferences, such as connections between satellites in the same orbital plane. Depending on the network’s intended task, we can combine different utilities $U_i$ through a convex combination, i.e., for $m$ different utilities, we can define $U = \sum_{i=1}^{m} w_i U_i$ such that $w_i \geq 0$ and $\sum_{i=1}^{m} w_i = 1$.
To create the underlying graph from the resulting Laplacian matrix, we first filter the entries of \(X\) with a threshold \(\tau\). This threshold can be seen as how conservative the operator is when considering which ISLs are proposed by the method. Afterwards, we select the best possible ISLs, until each node/satellite achieves its maximum degree or until there are no more recommended connections.

\paragraph*{Logarithmic regret for entry-wise non-zero utilities.} As our first contribution, we show that GUTO, summarized in~\eqref{eq:net-top-optimization-problem}, is strongly-convex for the set of utility matrices that satisfy, $U_{ij} \not= 0$ for all $i,j \in \{1, 2, \dots, n\}$. This result is summarized in Theorem~\ref{thm:strong-convexity-proof}, and the proof and auxiliary lemmas are presented in Appendix~\ref{sec:proof_str_convexity}.
\begin{theorem}
    \label{thm:strong-convexity-proof}
    Consider the space of real square matrices $\mathbb{R}^{n \times n}$ associated with the Frobenius inner product $\langle X, Y \rangle_F = \sum_{i=1}^n \sum_{j=1}^n X_{ij} Y_{ij}$. Let $f : \mathbb{R}^{n \times n} \to \mathbb{R}$ be defined as $f(X) = \frac{1}{2}\|(X-P)\odot U\|^2_F - \textbf{Tr}(\Lambda \odot X)$, following the form of the objective function in~\eqref{eq:net-top-optimization-problem}. Then, if $U_{ij} \not= 0$ for all $i,j \in \{1, 2, \dots, n\}$, $f$ is $\mu$-strongly convex, with $\mu = \min\{U_{ij}^2\}$.
\end{theorem}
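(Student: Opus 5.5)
The plan is to split $f$ into a quadratic part and a linear part and then check strong convexity directly. The linear part is $-\textbf{Tr}(\Lambda\odot X)$, and the quadratic part is $g(X)=\frac12\|(X-P)\odot U\|_F^2$. First, expand the trace term entry-wise. We have $\textbf{Tr}(\Lambda\odot X)=\sum_i \Lambda_{ii}X_{ii}$, which is the Frobenius inner product $\langle \Lambda\odot I, X\rangle_F$ and hence linear in $X$. Adding a linear function changes neither the Hessian nor the strong-convexity inequality. It therefore suffices to show that $g$ is $\mu$-strongly convex with $\mu=\min_{ij} U_{ij}^2$.

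Next, write $g$ entry-wise as
\[
g(X)=\tfrac12\sum_{i,j} U_{ij}^2 (X_{ij}-P_{ij})^2 .
\]
This is a separable quadratic in the $n^2$ coordinates $X_{ij}$. Its gradient is $\nabla g(X)=(X-P)\odot U\odot U$. Viewing $\mathbb{R}^{n\times n}\cong\mathbb{R}^{n^2}$ with the Frobenius inner product (which is the Euclidean one), the Hessian is diagonal with entries $U_{ij}^2$, i.e. $\nabla^2 g = \mathrm{diag}(\mathrm{vec}(U\odot U))$.

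The conclusion then follows from the characterization of strong convexity via the gradient. For any $X,Y$,
\[
\langle \nabla f(X)-\nabla f(Y), X-Y\rangle_F=\sum_{i,j}U_{ij}^2 (X_{ij}-Y_{ij})^2\ge \mu\|X-Y\|_F^2 .
\]
Equivalently, the exact second-order expansion of the quadratic gives
\[
f(Y)=f(X)+\langle\nabla f(X),Y-X\rangle_F+\tfrac12\sum_{ij}U_{ij}^2(Y_{ij}-X_{ij})^2 ,
\]
and the last term is at least $\frac{\mu}{2}\|Y-X\|_F^2$. The hypothesis $U_{ij}\neq0$ for all $i,j$ is exactly what makes $\mu>0$. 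I would also note that $\mu$ is the smallest Hessian eigenvalue, so the constant is tight.

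There is no real obstacle here. The only point needing care is to state precisely which definition of strong convexity is used, namely the first-order inequality with respect to $\|\cdot\|_F$. I would also remark that strong convexity holds on all of $\mathbb{R}^{n\times n}$, hence on the convex feasible set of \eqref{eq:net-top-optimization-problem}, which is what the OGD logarithmic-regret result requires.
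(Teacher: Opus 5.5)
Your proof is correct, and it reaches the result by a somewhat different route than the paper.

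The paper first proves an auxiliary equivalence (Lemma~\ref{lemma:strong-convexity-equivalence}): $f$ is $\mu$-strongly convex if and only if every line restriction $t\mapsto f(X+tV)$ is $(\mu\|V\|_F^2)$-strongly convex. It then computes the scalar second derivative $g''(t)=\|V\odot U\|_F^2\ge \min\{U_{ij}^2\}\|V\|_F^2$ and invokes the one-dimensional second-order condition, with a separate remark for the degenerate direction $V=0$.

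You skip the reduction to lines and work directly in $\mathbb{R}^{n\times n}\cong\mathbb{R}^{n^2}$. Because $f$ is quadratic, its second-order expansion is exact. The remainder $\frac12\sum_{i,j}U_{ij}^2(Y_{ij}-X_{ij})^2$ then gives the first-order strong-convexity inequality at once, or equivalently the gradient-monotonicity form.

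Both arguments rest on the same quadratic form $\sum_{i,j}U_{ij}^2V_{ij}^2$. Your version is shorter and self-contained: it needs neither the equivalence lemma nor a second-order characterization theorem, and it avoids the $V=0$ caveat. It also shows that $\mu$, as the smallest Hessian eigenvalue, is the sharp constant on all of $\mathbb{R}^{n\times n}$. The paper's line-restriction route is more general, since it applies verbatim to any twice-differentiable objective. For this separable quadratic, that generality is not needed.
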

A simple example that satisfies this condition is a utility matrix where each entry $(i,j)$ expresses the reciprocal of the distance between nodes $i$ and $j$. Thus, we can apply Theorem~\ref{thm:strong-convexity-ogd-regret}, showing that OGD obtains static logarithmic regret when applied to cost functions involving this type of utility matrices. In addition to the strong theoretical guarantees, we show, in Section~\ref{sec:results}, that this simple utility matrix achieves good results in both synthetic and real-world scenarios.

\subsection{ADMM-based efficient implementation}
\label{sec:admm-based-efficient-implementation}

We can express Problem~\eqref{eq:net-top-optimization-problem} in ADMM form as
\begin{equation}
    \label{eq:general-admm-problem-formulation}
    \begin{aligned}
        \underset{X, Z}{\text{minimize}} \quad f(X) + g(Z) \quad \text{subject to} \quad X - Z = \mathbf{0_{n \times n}} ,
    \end{aligned}
\end{equation}
where $\mathbf{0_{n \times n}}$ denotes the all-zeros $n \times n$ matrix, $g(Z) = i_{B}(Z)$ is the indicator function of set \(B \coloneqq \{X \in \mathbb{R}^{n \times n} : B^{\text{lower}} \leq X \leq B^{\text{upper}} \}\), and $f(X) = \frac{1}{2} \|(X-P) \odot U\|_F^2 - \mathbf{Tr}(\Lambda \odot X)$, where $\text{dom} f = \{X \in \mathbb{R}^{n \times n} : X = X^T, X \mathbf{1} = \mathbf{0} \}$. Here, $f$ is the original objective with a restricted domain. The augmented Lagrangian (in the scaled form~\citep{boyd2011admm}) is given by $L_{\rho}(X,Z,V) = f(X) + g(Z) + \frac{\rho}{2} \left\| X - Z + V \right\|_F^2$, where $V$ is the scaled dual variable. Then, the general formulation of ADMM for this problem is given by
\begin{equation}
    \label{eq:general-admm}
    \begin{aligned}
        X^{k+1} &= \underset{X \in \text{dom} f}{\arg\min} \left\{ f(X) + \frac{\rho}{2} \left\| X - Z^k + V^k \right\|_F^2 \right\} \\
        Z^{k+1} &= \underset{Z \in \mathbb{R}^{n \times n}}{\arg\min} \left\{ g(Z) + \frac{\rho}{2} \left\| X^{k+1} - Z + V^k \right\|_F^2 \right\} \\
        V^{k+1} &= V^k + \left( X^{k+1} - Z^{k+1} \right)
    \end{aligned}
\end{equation}

\input{Files/net-top-config-off}
\input{Files/net-top-config-on}

\section{Numerical Experiments }
\label{sec:results}
Throughout this section, we compare our method with the state-of-the-art on both synthetic and real-world data and present metrics for the graphs generated, viz, total number of edges in the graph, ``E''; average degree of a satellite, ``A-Deg''; number of connected components, ``CC''; average shortest path, ``A-SP''; and the wall clock time in seconds, ``Time (s)''. Wall clock time experiments were executed on an M5 Pro MacBook Pro CPU, with Starlink metric experiments executed in 1 core of an Intel(R) Xeon(R) Gold 6330. Our synthetic dataset consists of an Iridium-like constellation with six elliptical orbital planes, each one with eleven satellites per plane. For real-world data, we consider a constellation with 1591 Starlink satellites, distributed across 50 orbital planes, containing between 14 and 64 satellites, whose state was retrieved every five minutes, during a period of 24 hours.

As utility matrices, we use the reciprocals of the distance between satellites and a structural utility that encourages connections with the three closest orbital planes. Formally \(U^{\text{sat\_dist\_rec}}_{ij} = \frac{1}{d_{ij}}\), where $d_{ij}$ represents the distance between satellite \(i\) and \(j\). The diagonal values, where $d_{ij} = 0$, are bounded by \(1\times 10^{-5}\). The utility \(U^{\text{3-closest}}\) pairs satellites and orbital planes in a round-robin fashion, where the only considered orbital planes are the three closest ones. Formally, \(U^{\text{3-closest}}_{ij} = \frac{1}{d_{ij}}\) if satellite \(j\) belongs to the orbital plane considered by satellite \(i\), otherwise, \(U^{\text{3-closest}}_{ij} = 0\). We combine these matrices through a convex combination, i.e., \(U = \lambda U^{\text{3-closest}} + (1-\lambda) U^{\text{sat\_dist\_rec}}\), with \(\lambda \in [0, 1)\)~\footnote{To ensure strong-convexity by Theorem~\ref{thm:strong-convexity-proof}, $\lambda$ cannot be equal to $1$, to ensure $U$ is a entry-wise non-zero matrix.}. Each row of the utility matrices is normalized by its maximum value. For the ADMM sub-routines we consider the stopping criteria $\frac{\left|\left| V^{k}-V^{k-1}\right|\right|}{\left|\left|V^{k-1}\right|\right|} < \epsilon = 10^{-7}$, where $V^k$ is the dual variable.

\paragraph*{Ablation study.} Our approach, ASTRA, relies on different parameters, such as \(\lambda\), \(\rho\), \(\text{proj}_{its}\) and FOV approximation (in Appendix~\ref{sec:astra-parameters} we explain the meaning of each parameter). In Appendix~\ref{app:ablation_studies}, we performed ablations in order to select the best parameters. In the following experiments, ASTRA uses \(\lambda = 0.6\), \(\rho=1\), \(\text{proj}_{its}=5\), and ``H+C'' as the approximation of the FOV of a satellite. For GUTO, we use the same \(\lambda\) and approximation. For both, we set \(\Lambda = 10^{-4} I\).

\paragraph*{State-of-the-art comparison.} In Table~\ref{tab:iridium-like_constellation}, we summarize the results of ASTRA and the different methods we compare it with, in the Iridium-like constellation. We see that +Grid and xGrid are the fastest methods, although the latter presents the lowest connectivity metrics and the highest shortest path. Both Motif~\cite{Network-topology-design-at-27k-km/hour} and DoTD~\cite{ron2025time} achieve the best network connectivity out of all the methods, although Motif takes a much longer time to achieve this. Both GUTO and ASTRA achieve smaller average shortest path length, and comparable connectivity with DoTD and Motif. Furthermore, GUTO and ASTRA are equivalent, with ASTRA requiring approximately 22\% less execution time. 

\input{Tables/iridium_ours_sota}

\paragraph*{OGD with fine-tuned step-size for optimal dynamic regret.} As described in Section~\ref{sec:online-derivation}, we can retrieve the path length of a dynamic comparator to deploy a fine-tuned OGD. Exploiting the previous experiment, we consider the offline ASTRA as the dynamic comparator, for which we obtain $P(U_1, \dots, U_T) \approx 9393$. In Fig.~\ref{fig:regret_comparison}, we show that a deployed OGD with the fine-tuned step-size $\eta = \sqrt{\frac{P(U_1, \dots, U_T)}{T}}$ achieves a smaller regret, therefore superior results, than a deployed OGD with the standard step-size $\eta = \frac{1}{\sqrt{T}}$ used in OCO.

\input{Figures_tex/regret_comparison}

\paragraph*{Changes in constellation topology}
One of the main drawbacks of state-of-the-art methods is that, to cope with changes in the satellite network, such as satellite removals or additions, a topology needs to be computed from scratch, which is computationally expensive and time-consuming. We show that online algorithms with ASTRA subroutines can adapt to these changes without computing a topology from scratch; we consider the same Iridium-like constellation and randomly select six satellites to be removed. For ASTRA-OCG, we consider step-size \(\eta = \frac{1}{T^{\frac{3}{4}}}\), and $\sigma_t = \min \left\{1, \frac{2}{t^{\frac{1}{2}}} \right\}$~\cite{hazan2016introduction}. For ASTRA-OGD, we consider the dynamic regret step size $\eta = \sqrt{\frac{P(U_1, \dots, U_T)}{T}} = \sqrt{\frac{9393}{T}}$. 

\input{Figures_tex/online_adapt_rem_add}

Fig.~\ref{fig:adapt_online_rem_add} shows the smoothed average residual per entry, defined as
\begin{align*}
    \text{Avg-RpE} = \frac{\left\| X_{\text{on}} - X_{\text{GUTO}}\right\|_F^2}
{\#\left[(x_{ij} \neq 0), \forall_{x_{i, j}} \in X_{\text{GUTO}} \right]},
\end{align*}
where \(X_{\text{on}}\) is the Laplacian matrix proposed by the online algorithms and \(X_{\text{GUTO}}\) denotes the matrix proposed by GUTO. We see that ASTRA-OGD easily adapts to both addition and removal of satellites, while ASTRA-OCG seems to struggle with satellite addition, as the residual slightly increases, and different topologies are proposed for the same set of satellites. This discrepancy in performance is expected, as OCG is tailored to control, at each iteration, the cumulative sum of past costs.

\paragraph*{Starlink data.}

In this section we evaluate our method and compare it to the state-of-the-art on Starlink data. At each time step, the algorithms are applied to generate a candidate topology. The metrics reported in each table correspond to the mean values and their respective standard deviations across all time steps. Since the average shortest path is defined only for topologies consisting of a single connected component, we report this metric exclusively for the topologies that satisfy this condition, thus denoting it as ``A-SP*''. In this scenario, we do not evaluate Motif, as its implementation is not designed to support constellations with orbital planes containing a varying number of satellites. We use the best performing parameters on the Iridium-like constellation to evaluate ASTRA and GUTO on Starlink data, with exception of \(\text{proj}_{its}= 20\) and \(\tau = -0.45\), due to increased complexity.
A key qualitative distinction in the Starlink setting is connectivity. While several baselines are faster or competitive on some local graph statistics, only GUTO and ASTRA attain fully connected topologies in at least some runs (Tab.~\ref{tab:offline_starlink}). This is important because the average shortest path is only meaningful as a full-network connectivity metric in that regime. Accordingly, the most relevant comparison is between GUTO and ASTRA: ASTRA preserves the connectivity quality of GUTO while reducing computational cost. By contrast, the remaining baselines often yield fragmented topologies, which limits direct comparison in terms of end-to-end path efficiency.
\input{Tables/offline_starlink} 

In terms of wall-clock time (Tab.~\ref{tab:times_offline}), ASTRA achieves 83\% faster execution time on average when compared with DoTD and 15\% when compared with GUTO. Hence, ASTRA should be viewed as an efficient solver that retains the desirable connectivity behavior of GUTO under realistic partial-deployment and non-uniform-spacing conditions. Both +Grid and \(\times\)Grid are significantly faster than other methods, due to the trade-off between computational cost and how dynamic an approach is. Further experiments on ASTRA-OGD and ASTRA-OCG are presented in App.~\ref{app:on_alg_on_strl}, cementing ASTRA as an efficient and theoretically-grounded framework for dynamic satellite topology reconfiguration.
\input{Tables/times_offline}
\paragraph*{Limitations \label{sec:limitations.}}
ASTRA focuses on network topology configuration and does not address routing in these networks. Consequently, data transmission aspects and routing efficiency remain unexplored. Base stations were not considered in this study. Therefore, results may vary depending on their number and location. Although online algorithms can be deployed on each satellite, ensuring consistency across all satellites in a constellation is non-trivial and may introduce additional challenges. From a computational perspective, the reliance on matrix-based representations incurs quadratic spatial complexity, as satellites must store and process these matrices.

\section*{Conclusions \label{sec:conclusion}}

We introduced ASTRA, a framework for dynamic satellite topology reconfiguration that combines an ADMM-based offline solver with efficient ADMM-based constrained updates for online methods. We showed that, for a useful class of utility matrices, the topology objective is strongly convex, yielding logarithmic static regret for OGD, and we related dynamic regret under inexact inner loops to infeasibility and ADMM residuals. In experiments, ASTRA matches SOTA, presenting a good trade-off between computational time and topology characteristics, adapts to satellite additions and removals without recomputing the topology from scratch, and remains effective on realistic Starlink data under partial deployment and non-uniform spacing, where only GUTO and ASTRA attain fully connected topologies in some runs. Overall, the results support online learning as a practical route towards realistic topology reconfiguration in large LEO networks.

\paragraph{Broader impacts.}
This work may have positive societal impact by improving the robustness of satellite constellations that support communication and Earth-observation, helping maintain continuity under failures and changing network conditions. Potential negative impacts include dual-use in military or surveillance settings, degraded connectivity if recommendations are deployed without sufficient validation.

\bibliographystyle{abbrv}
\bibliography{mybib}

\clearpage
\appendix

\section{Online Convex Optimization Background}
\label{sec:oco_appendix}
Online Convex Optimization is an important framework for modeling sequential decision-making~\cite{hazan2016introduction, orabona2019modern}. This framework can be seen as a structured repeated game, where a player iteratively chooses an action over some convex set \(\mathcal{K}\), while an adversary selects a convex cost function \(f_t\) \textit{a posteriori}, causing the player to incur a cost associated with that decision. The objective of the player is to perform as well as a comparator strategy over all rounds of interaction.

The main performance metric is the regret, defined as the difference between the cumulative cost incurred by the player and that incurred by the comparator, namely
$$\mathrm{Regret}_T
=
\sum_{t=1}^T f_t(x_t)
-
\sum_{t=1}^T f_t(u_t),$$
where $x_t$ denotes the decision selected by the player at time step $t$, and $u_t$ denotes the comparator decision at the same round. When the comparator selects the best fixed decision in hindsight, i.e.,
$$u_t
=
\arg\min_{x \in \mathcal{K}} \sum_{\tau=1}^T f_\tau(x)
\qquad \text{for all } t,$$
the metric is called \emph{dynamic regret}, and is further denoted by \(\mathrm{D\text{-}Regret}_T\).

Two important classes of algorithms are projection-based and projection-free algorithms~\cite{hazan2016introduction}. The former ensure feasibility by projecting the iterate onto the decision set \(\mathcal{K}\). However, in some cases this projection can be computationally expensive. Projection-free algorithms can then serve as an efficient alternative, avoiding that projection by solving a linear optimization problem instead. The simplest projection-based algorithm is Online Gradient Descent (OGD), where, at each time step \(t\), one performs a gradient step with respect to the revealed cost function \(f_t\) and then projects onto \(\mathcal{K}\). A simple example of a projection-free algorithm is the Online Conditional Gradient (OCG) algorithm, which, at each iteration \(t\), solves a linear optimization problem based on the aggregate sum of all previous cost functions together with a strongly convex and smooth regularization term.

For general convex cost functions $f_t$  it is known that OGD and OCG attain regret of $O(\sqrt{T})$ and $O(T^{3/4})$, respectively, when compared to any fixed comparator~\cite{hazan2016introduction}. These two methods were employed by Norberto et al.~\cite{norberto2026online} for the problem of network topology configuration. In our setting, for a useful class of utility matrices, Theorem~\ref{thm:strong-convexity-proof} shows that the resulting losses are strongly convex. Hence, by the standard OGD guarantee for strongly convex losses, OGD attains logarithmic static regret. For completeness, we restate below the classical result that serves as the basis for our analysis.
\begin{theorem}[{\cite[Theorem 3.3]{hazan2016introduction}}]
    \label{thm:strong-convexity-ogd-regret}
    For \(L\)-Lipschitz continuous, \(\alpha\)-strongly convex cost functions \(f_t\), Online Gradient Descent with step sizes \(\eta_t = \frac{1}{\alpha t}\) attains, for all \(T \geq 1\), the static regret bound
    \[
    \mathrm{Regret}_T \leq \frac{L^2}{2\alpha}(1+\log T).
    \]
\end{theorem}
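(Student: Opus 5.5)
The plan is the standard potential-function argument for projected gradient descent, carried out in the Euclidean space $(\mathbb{R}^{n\times n},\langle\cdot,\cdot\rangle_F)$. This is the setting in which Theorem~\ref{thm:strong-convexity-proof} gives the strong convexity constant.

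\textbf{Setup.} Fix an arbitrary comparator $x^\star\in\mathcal{K}$, for instance the best fixed decision in hindsight. Write $\nabla_t=\nabla f_t(x_t)$. Since $f_t$ is $L$-Lipschitz on $\mathcal{K}$, we have $\|\nabla_t\|\le L$.

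\textbf{Per-round bound from strong convexity.} $\alpha$-strong convexity gives the lower bound
\[
f_t(x^\star)\ge f_t(x_t)+\langle\nabla_t,x^\star-x_t\rangle+\tfrac{\alpha}{2}\|x^\star-x_t\|^2 .
\]
Rearranging,
\[
f_t(x_t)-f_t(x^\star)\le \langle\nabla_t,x_t-x^\star\rangle-\tfrac{\alpha}{2}\|x_t-x^\star\|^2 .
\]
This replaces the plain convexity step of the $O(\sqrt{T})$ analysis.

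\textbf{Controlling the linear term.} The OGD update is $x_{t+1}=\Pi_{\mathcal{K}}(x_t-\eta_t\nabla_t)$. Because $\mathcal{K}$ is closed and convex, the projection is nonexpansive, and $x^\star\in\mathcal{K}$ is fixed by it. Hence
\[
\|x_{t+1}-x^\star\|^2\le\|x_t-x^\star\|^2-2\eta_t\langle\nabla_t,x_t-x^\star\rangle+\eta_t^2\|\nabla_t\|^2 .
\]
Solving for the inner product gives
\[
2\langle\nabla_t,x_t-x^\star\rangle\le\frac{\|x_t-x^\star\|^2-\|x_{t+1}-x^\star\|^2}{\eta_t}+\eta_tL^2 .
\]

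\textbf{Summation and telescoping.} Substitute this into the per-round bound and sum over $t=1,\dots,T$. Use the convention $1/\eta_0:=0$ and drop the final term $-\|x_{T+1}-x^\star\|^2/\eta_T\le 0$. This yields
\[
2\,\mathrm{Regret}_T\le\sum_{t=1}^T\|x_t-x^\star\|^2\Bigl(\frac{1}{\eta_t}-\frac{1}{\eta_{t-1}}-\alpha\Bigr)+L^2\sum_{t=1}^T\eta_t .
\]
This telescoping step is the crux of the proof. The choice $\eta_t=1/(\alpha t)$ is exactly what makes each coefficient $1/\eta_t-1/\eta_{t-1}-\alpha$ vanish, including at $t=1$, where it equals $\alpha-0-\alpha=0$. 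So the distance terms disappear without any bound on the diameter of $\mathcal{K}$.

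\textbf{Conclusion.} What remains is
\[
\mathrm{Regret}_T\le\frac{L^2}{2\alpha}\sum_{t=1}^T\frac1t\le\frac{L^2}{2\alpha}(1+\log T),
\]
using the harmonic-sum bound $\sum_{t\le T}1/t\le 1+\int_1^T ds/s$. The only care points are the bookkeeping of the telescoping indices and confirming that the Lipschitz constant bounds the gradient norm in the same Frobenius norm used for strong convexity.
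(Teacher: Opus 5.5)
Your proposal is correct and is exactly the textbook proof of Hazan's Theorem~3.3, which the paper cites rather than reproves: the strong-convexity lower bound, nonexpansiveness of $\Pi_{\mathcal{K}}$, telescoping with $1/\eta_0:=0$ so that every coefficient $1/\eta_t-1/\eta_{t-1}-\alpha$ vanishes, and the harmonic-sum bound. One small wrinkle: when $\mathcal{K}$ is not full-dimensional (the paper's feasible set lies in the subspace of symmetric zero-row-sum matrices), Lipschitz continuity on $\mathcal{K}$ bounds only the tangential part of $\nabla_t$; this is harmless, since you can either read $L$ as a bound on the gradient norm, as Hazan does, or replace $\nabla_t$ by its tangential component, which changes neither the OGD iterates nor the inner products with $x_t-x^\star$.
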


\section{ADMM Background}
\label{sec:ADMM}
The Alternating Direction Method of Multipliers (ADMM) is a classical splitting method for optimization problems of the form
\begin{equation}
\label{eq:admm_generic}
\begin{aligned}
\min_{x,z} \quad & f(x)+g(z)\\
\text{s.t.} \quad & Ax+Bz=c,
\end{aligned}
\end{equation}
where $x\in\mathbb{R}^n$, $z\in\mathbb{R}^m$, $A\in\mathbb{R}^{p\times n}$, $B\in\mathbb{R}^{p\times m}$, and $c\in\mathbb{R}^p$. ADMM combines variable splitting with dual updates, and is especially useful when the objective or constraints decompose into parts that are easier to handle separately.

The augmented Lagrangian is
\[
L_\rho(x,z,y)
=
f(x)+g(z)+y^\top(Ax+Bz-c)+\frac{\rho}{2}\|Ax+Bz-c\|^2,
\]
where $y$ is the dual variable and $\rho>0$ is a penalty parameter. ADMM alternates minimization with respect to $x$ and $z$, followed by a dual ascent step:
\begin{align}
x^{k+1} &= \arg\min_x L_\rho(x,z^k,y^k),\\
z^{k+1} &= \arg\min_z L_\rho(x^{k+1},z,y^k),\\
y^{k+1} &= y^k + \rho\,(Ax^{k+1}+Bz^{k+1}-c).
\end{align}

In ASTRA, we use a consensus form of ADMM in which the structured Laplacian constraints are handled in one block and the box constraints in the other; the resulting specialization is given in Section~\ref{sec:astra-section}.

\section{Proof of strong convexity}
\label{sec:proof_str_convexity}

\begin{lemma}
    \label{lemma:strong-convexity-equivalence}
    Let $f : \mathcal{H} \to \mathbb{R}$ be a function defined in the real Hilbert space $\mathcal{H}$, with the norm $\| \cdot \|_{\mathcal{H}} \coloneqq \sqrt{ \langle \cdot, \cdot \rangle_{\mathcal{H}}}$ induced by the inner product $ \langle \cdot, \cdot \rangle_{\mathcal{H}}$. The function $f$ is $\mu$-strongly convex, for some parameter $\mu > 0$, if and only if for every $z \in \mathcal{H}$ and every direction $v \in \mathcal{H}$, the function $g : \mathbb{R} \to \mathbb{R}$, defined as $g(t) = f(z + t v)$, is $\left( \mu \| v \|_{\mathcal{H}}^2 \right)$-strongly convex.
\end{lemma}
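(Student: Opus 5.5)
The plan is to work with the characterization of $\mu$-strong convexity through convexity of a quadratic shift: $f$ is $\mu$-strongly convex on $\mathcal{H}$ if and only if $h(x) \coloneqq f(x) - \frac{\mu}{2}\|x\|_{\mathcal{H}}^2$ is convex. For a one-dimensional function $g$, $c$-strong convexity likewise means $g(t) - \frac{c}{2}t^2$ is convex. The lemma then follows by reducing both sides to statements about convexity along lines, which is the classical fact that a function is convex if and only if its restriction to every line is convex.

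For the forward direction, assume $f$ is $\mu$-strongly convex and fix $z, v \in \mathcal{H}$. Expanding the norm gives
\begin{equation*}
\|z+tv\|_{\mathcal{H}}^2 = \|z\|_{\mathcal{H}}^2 + 2t\langle z, v\rangle_{\mathcal{H}} + t^2\|v\|_{\mathcal{H}}^2 .
\end{equation*}
Hence
\begin{equation*}
g(t) - \tfrac{\mu\|v\|_{\mathcal{H}}^2}{2}t^2 = h(z+tv) + \tfrac{\mu}{2}\|z\|_{\mathcal{H}}^2 + \mu t \langle z, v\rangle_{\mathcal{H}} .
\end{equation*}
The first term is a convex function composed with an affine map of $t$, so it is convex in $t$. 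The remaining terms are affine in $t$. The sum is therefore convex, and $g$ is $\mu\|v\|_{\mathcal{H}}^2$-strongly convex.

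For the converse, assume every such $g$ is $\mu\|v\|_{\mathcal{H}}^2$-strongly convex. The same identity, read in reverse, shows that $t \mapsto h(z+tv)$ equals a convex function minus an affine one, so it is convex for every $z$ and $v$. To conclude that $h$ is convex, take arbitrary $x, y \in \mathcal{H}$ and $\theta \in [0,1]$, and set $z = x$, $v = y - x$. Convexity of the one-dimensional restriction at the points $t = 0$ and $t = 1$ gives
\begin{equation*}
h(\theta y + (1-\theta)x) \le \theta h(y) + (1-\theta)h(x).
\end{equation*}
So $h$ is convex, and $f$ is $\mu$-strongly convex.

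The main point requiring care is fixing the definition of strong convexity so that the argument needs no differentiability. The equivalence between the inequality definition
\begin{equation*}
f(\theta x+(1-\theta)y) \le \theta f(x)+(1-\theta)f(y) - \tfrac{\mu}{2}\theta(1-\theta)\|x-y\|_{\mathcal{H}}^2
\end{equation*}
and convexity of $f - \frac{\mu}{2}\|\cdot\|_{\mathcal{H}}^2$ follows from the Hilbert-space identity
\begin{equation*}
\theta\|x\|^2 + (1-\theta)\|y\|^2 - \|\theta x + (1-\theta)y\|^2 = \theta(1-\theta)\|x-y\|^2 .
\end{equation*}
I would state and verify this identity first; it is where the inner-product structure is used. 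Applied with $\mathcal{H} = \mathbb{R}$, the same identity gives the one-dimensional version.

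Alternatively, one could compare the defining inequality directly along the segment. With $z = x$ and $v = y - x$, the quantity $\|x-y\|_{\mathcal{H}}^2$ equals $\|v\|_{\mathcal{H}}^2 \cdot |1-0|^2$, which is precisely where the scaling factor $\mu\|v\|_{\mathcal{H}}^2$ in the lemma comes from.
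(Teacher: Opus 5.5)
Your proof is correct, but your main argument takes a different route from the paper's. The paper works directly with the defining inequality. It sets $x = z + t_1 v$ and $y = z + t_2 v$, observes that $f(\lambda x + (1-\lambda) y) = g(\lambda t_1 + (1-\lambda) t_2)$ and $\|x-y\|_{\mathcal{H}}^2 = (t_1-t_2)^2\|v\|_{\mathcal{H}}^2$, and translates the strong-convexity inequality for $f$ into the one for $g$ and back. This is essentially the alternative you sketch in your last paragraph. You instead pass through the characterization that $f - \frac{\mu}{2}\|\cdot\|_{\mathcal{H}}^2$ is convex, justified by the identity $\theta\|x\|^2 + (1-\theta)\|y\|^2 - \|\theta x + (1-\theta)y\|^2 = \theta(1-\theta)\|x-y\|^2$. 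You then reduce the lemma to the classical fact that a function is convex if and only if it is convex along every line; the cross term $\mu t \langle z, v\rangle_{\mathcal{H}}$ is harmless because it is affine in $t$.

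Your route has two small advantages. First, the degenerate direction $v = 0$ needs no special handling, since a $0$-strongly convex function is just a convex one; the paper needs a separate remark for this case. Second, your converse makes explicit the choice $z = x$, $v = y - x$, $t \in \{0,1\}$ that reaches an arbitrary pair $x, y$, which the paper leaves implicit.

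The paper's route, in turn, is shorter and uses only the homogeneity $\|(t_1 - t_2) v\| = |t_1 - t_2|\,\|v\|$, so it holds verbatim in any normed space. Your quadratic-shift characterization genuinely needs the inner-product identity. The Hilbert structure is therefore essential to your proof, even though the lemma itself does not require it.
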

\begin{proof}
    First, assume that $f$ is $\mu$-strongly convex. Therefore, for $x, y \in \mathcal{H}$ and $\lambda \in [0,1]$, we have
    \begin{equation}
        \label{eq:strong-convexity-general}
        \begin{aligned}
            f(\lambda x + (1 - \lambda) y) \leq \lambda f(x) + (1 - \lambda) f(y) - \frac{\mu}{2} \lambda (1 - \lambda) \|x - y\|_{\mathcal{H}}^2 .
        \end{aligned}
    \end{equation}
    Let $t_1, t_2 \in \mathbb{R}$. Then, consider $x = z + t_1 v$ and $y = z + t_2 v$, for any $x \in \mathcal{H}$ and direction $v \in \mathcal{H}$. Thus, we have
    \begin{equation}
        \label{eq:functions-equality}
        \begin{aligned}
            f(\lambda x + (1 - \lambda) y) &= f(\lambda (z + t_1 v) + (1 - \lambda) (z + t_2 v)) \\
            &= f(z + \lambda t_1 v + (1 - \lambda) t_2 v) \\
            &= g(\lambda t_1 + (1 - \lambda) t_2)
        \end{aligned}
    \end{equation}
    Thus, applying the strong convexity definition presented in~\eqref{eq:strong-convexity-general}, we have
    \begin{align*}
        g(\lambda t_1 + (1 - \lambda) t_2) &= f(\lambda x + (1 - \lambda) y) \\
        &\leq \lambda f(x) + (1 - \lambda) f(y) - \frac{\mu}{2} \lambda (1 - \lambda) \|x - y\|_{\mathcal{H}}^2 \\
        &= \lambda g(t_1) + (1 - \lambda) g(t_2) - \frac{\mu}{2} \lambda (1 - \lambda) \|(t_1 - t_2) v\|_{\mathcal{H}}^{2} \\
        &= \lambda g(t_1) + (1 - \lambda) g(t_2) - \frac{\mu \| v \|_{\mathcal{H}}^{2}}{2} \lambda (1 - \lambda) (t_1 - t_2)^2
    \end{align*}
    Conversely, assume $g$ is $\left( \mu \| v \|_{\mathcal{H}}^2 \right)$-strongly convex, therefore, for $t_1, t_2 \in \mathbb{R}$ and $\lambda \in (0,1)$, we have
    \begin{align*}
        g(\lambda t_1 + (1 - \lambda) t_2) &\leq \lambda g(t_1) + (1 - \lambda) g(t_2) - \frac{\mu \| v \|_{\mathcal{H}}^{2}}{2} \lambda (1 - \lambda) (t_1 - t_2)^2 \\
        &= \lambda f(z + t_1 v) + (1 - \lambda) f(z + t_2 v) - \frac{\mu}{2} \lambda (1 - \lambda) \|z - z + (t_1 - t_2) v\|_{\mathcal{H}}^{2} \\
        &= \lambda f(z + t_1 v) + (1 - \lambda) f(z + t_2 v) - \frac{\mu}{2} \lambda (1 - \lambda) \|(z + t_1 v) - (z + t_2 v)\|_{\mathcal{H}}^{2} \\
    \end{align*}
    By~\eqref{eq:functions-equality}, we have $f(\lambda (z + t_1 v) + (1 - \lambda) (z + t_2 v)) = g(\lambda t_1 + (1 - \lambda) t_2)$, which completes the proof.
\end{proof}
\begin{remark}
    Note that the case where $v = 0$ is trivial. While the parameter $\mu \| v \|_{\mathcal{H}}^{2}$ is not positive in this case, the strong convexity is preserved in the original function, since it reflects the case where $x = y$.
\end{remark}
\begin{theorem}
    Consider the space of real square matrices $\mathbb{R}^{n \times n}$ associated with the Frobenius inner product $\langle X, Y \rangle_F = \sum_{i=1}^n \sum_{j=1}^n X_{ij} Y_{ij}$. Let $f : \mathbb{R}^{n \times n} \to \mathbb{R}$ be defined as $f(X) = \frac{1}{2}\|(X-P)\odot U\|^2_F - \textbf{Tr}(\Lambda\odot X)$, where $\Lambda$ is a diagonal matrix and $\odot$ denotes the Hadamard product. Then, if $U_{ij} \not= 0$ for all $i,j \in \{1, 2, \dots, n\}$, $f$ is $\mu$-strongly convex, with $\mu = \min\{U_{ij}^2\}$.
\end{theorem}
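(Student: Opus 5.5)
The plan is to reduce the claim to one-dimensional restrictions via Lemma~\ref{lemma:strong-convexity-equivalence}, with $\mathcal{H}=\mathbb{R}^{n\times n}$ equipped with the Frobenius inner product. Fix an arbitrary base point $Z\in\mathbb{R}^{n\times n}$ and direction $V\in\mathbb{R}^{n\times n}$, and set $g(t)=f(Z+tV)$. By the lemma, it suffices to show that $g$ is $(\mu\|V\|_F^2)$-strongly convex with $\mu=\min_{i,j} U_{ij}^2$. The case $V=0$ is covered by the remark following the lemma, so we may assume $V\neq 0$.

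First, expand $g$ explicitly. Because the Hadamard product is bilinear, $(Z+tV-P)\odot U = (Z-P)\odot U + t\,(V\odot U)$. Hence
\begin{equation*}
g(t)=\tfrac12\|(Z-P)\odot U\|_F^2 + t\,\langle (Z-P)\odot U,\, V\odot U\rangle_F + \tfrac{t^2}{2}\|V\odot U\|_F^2 - \mathbf{Tr}(\Lambda\odot Z) - t\,\mathbf{Tr}(\Lambda\odot V).
\end{equation*}
The trace term is linear in $X$, so it contributes only an affine function of $t$. Thus $g$ is a quadratic polynomial in $t$ with leading coefficient $\tfrac12\|V\odot U\|_F^2$, and $g''(t)=\|V\odot U\|_F^2$ for every $t$.

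Second, bound this curvature from below:
\begin{equation*}
\|V\odot U\|_F^2=\sum_{i,j}U_{ij}^2V_{ij}^2\ \ge\ \min_{i,j}U_{ij}^2\sum_{i,j}V_{ij}^2=\mu\|V\|_F^2 .
\end{equation*}
The hypothesis $U_{ij}\neq 0$ for all $i,j$ gives $\mu>0$. This is exactly where the entry-wise nonzero assumption is needed: if some $U_{ij}=0$, a direction $V$ supported on that entry would give zero curvature.

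Finally, use the elementary fact that a twice differentiable function $g:\mathbb{R}\to\mathbb{R}$ with $g''\ge m>0$ is $m$-strongly convex. One way to see this is that $g(t)-\tfrac m2 t^2$ has nonnegative second derivative, hence is convex, which is equivalent to the convex-combination inequality used in the lemma. Alternatively, for a quadratic one can verify the inequality directly: $\lambda g(t_1)+(1-\lambda)g(t_2)-g(\lambda t_1+(1-\lambda)t_2)=\tfrac{a}{2}\lambda(1-\lambda)(t_1-t_2)^2$, where $a$ is the second derivative. Then apply Lemma~\ref{lemma:strong-convexity-equivalence} to conclude that $f$ is $\mu$-strongly convex.

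The only delicate point is the bookkeeping that the trace term $\mathbf{Tr}(\Lambda\odot X)$ is linear, so it does not affect curvature. Otherwise the argument is routine, and the main content lies in the Hadamard-weighted norm lower bound above.
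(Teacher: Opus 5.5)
Your proposal is correct and follows the paper's proof essentially step for step: restrict to lines via Lemma~\ref{lemma:strong-convexity-equivalence}, expand $g(t)=f(Z+tV)$ as a quadratic in $t$ in which the trace term is only affine, and bound $g''(t)=\|V\odot U\|_F^2\ge \min_{i,j}U_{ij}^2\,\|V\|_F^2$. The only difference is that you verify the one-dimensional strong-convexity inequality directly through the quadratic identity, which is slightly more self-contained than the paper's citation of the second-order condition.
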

\begin{proof}
    By Lemma~\ref{lemma:strong-convexity-equivalence}, we have that the function $f$ is $\mu$-strongly convex, for some parameter $\mu > 0$, if and only if $g(t) = f(X + t V)$ is $\left( \mu \|V\|_F^{2} \right)$-strongly convex, for every $X \in \mathbb{R}^{n \times n}$ and every direction $V \in \mathbb{R}^{n \times n}$. To prove the strong convexity, we will resort to the second-order condition, which states that function $g$ is $\left( \mu \|V\|_F^{2} \right)$-strongly convex if and only if $g''(t) \geq \mu \|V\|_F^{2} > 0$ (see~\citep{boyd2004convex}). First, let us see that
    \begin{align*}
        g(t) &= \frac{1}{2}\|(X + tV -P)\odot U\|^2_F - \textbf{Tr}(\Lambda \odot (X + t V)) \\
        &= \frac{1}{2}\|(X + tV -P)\odot U\|^2_F - \textbf{Tr}(\Lambda \odot X) - t \ \textbf{Tr}(\Lambda \odot V) \\
        &= \frac{1}{2} \|(X - P) \odot U\|_F^2 + t \langle (X - P) \odot U, V \odot U \rangle_F + \frac{1}{2} t^2 \| V \odot U \|_F^2 - \textbf{Tr}(\Lambda \odot X) - t \ \textbf{Tr}(\Lambda \odot V) ,
    \end{align*}
    where the second equality results from the linearity of the trace operator~\citep{axler2024linear}. Taking the first derivative, we have
    \begin{align*}
        g'(t) = \langle (X - P) \odot U, V \odot U  \rangle_F + t \| V \odot U \|_F^2 - \textbf{Tr}(\Lambda \odot V) ,
    \end{align*}
    and, consequently, the second derivative is
    \begin{align*}
        g''(t) &= \| V \odot U \|_F^2 \\
        &= \sum_{i=1}^{n} \sum_{j=1}^{n} U_{ij}^2 V_{ij}^2 \\
        &\geq \min\{U_{ij}^2\} \| V \|_F^2 
    \end{align*}
    Therefore, for $f$ to be strongly convex with parameter $\mu = \min\{U_{ij}^2\} > 0$, we have that $U_{ij} \not= 0$ for all $i,j \in \{1, 2, \dots, n\}$.
\end{proof}

\section{Proof of Lemma 3.2}
\label{sec:lemma_proj}

\begin{proof}
    Consider the optimization problem describing the projection of a matrix Y onto set $K$:
    \begin{align*}
        \underset{X}{\text{minimize}} &\quad \frac{1}{2} \|X - Y\|_F^2 \\
        \text{subject to} &\quad X=X^T \\
        &\quad X \mathbf{1} = \mathbf{0}
    \end{align*}
    The Lagrangian of this function is given by
    \begin{align*}
        L(X, V, \mu) = \frac{1}{2} \|X - Y\|_F^2 + \langle V, X - X^T \rangle + \mu^T X \mathbf{1} .
    \end{align*}
    The stationary condition, i.e., $\nabla_X L(X, V, \mu) = 0$, can be rearranged to obtain
    \begin{equation}
        \label{eq:stationarity-condition-X}
        \begin{aligned}
            X = Y + (V^T - V) - \mu \mathbf{1}^T .
        \end{aligned}
    \end{equation}
    It is a known fact that a square matrix $A$ can be uniquely decomposed into a symmetric part $S(A) = \frac{1}{2}(A + A^T)$ and a skew-symmetric part $C(A) = \frac{1}{2}(A - A^T)$, i.e., $A = S(A) + C(A)$~\citep{horn2012matrix}. At optimality, we know that $X$ is symmetric, then, by considering the symmetric part, applying it to Eq.~\eqref{eq:stationarity-condition-X}, we have
    \begin{equation}
        \label{eq:X-symmetric}
        \begin{aligned}
            X = S(Y) - \frac{1}{2} (\mu \mathbf{1}^T + \mathbf{1} \mu^T) .
        \end{aligned}
    \end{equation}
    Now, enforcing the zero row sum, we have
    \begin{equation}
        \label{eq:mu-first-derivation}
        \begin{aligned}
            X \mathbf{1} = 0 &\iff S(Y) \mathbf{1} - \frac{1}{2} (\mu \mathbf{1}^T \mathbf{1} + \mathbf{1} \mu^T \mathbf{1}) = 0 \\
            &\iff \frac{1}{2} (n \mu + \mathbf{1} \mu^T \mathbf{1}) = S(Y) \mathbf{1} \\
            &\iff \mu = \frac{2}{n} S(Y) \mathbf{1} - \frac{r}{n} \mathbf{1} ,
        \end{aligned}
    \end{equation}
    where $r = \mathbf{1}^T \mu$ (note that $r$ is a scalar and $\mathbf{1}^T \mu = \mu^T \mathbf{1}$). From the result above, substituting $\mu$ in this expression of $r$, we have that
    \begin{align*}
        r &= \frac{2}{n} \mathbf{1}^T S(Y) \mathbf{1} - \frac{r}{n} \mathbf{1}^T \mathbf{1} \iff \\
        r &= \frac{2}{n} \mathbf{1}^T S(Y) \mathbf{1} - \frac{r}{n} n \iff \\
        r &= \frac{1}{n} \mathbf{1}^T S(Y) \mathbf{1} .
    \end{align*}
    Combining this result with~\eqref{eq:mu-first-derivation}, we have
    \begin{equation}
        \label{eq:mu-second-derivation}
        \begin{aligned}
            \mu = \frac{2}{n} S(Y) \mathbf{1} - \frac{1}{n^2}  \left( \mathbf{1}^T S(Y) \mathbf{1} \right) \mathbf{1} ,
        \end{aligned}
    \end{equation}
    Plugging this result into~\eqref{eq:X-symmetric}, we obtain
    \begin{align*}
        X &= S(Y) - \frac{1}{2} \left( \frac{2}{n} S(Y) \mathbf{1} \mathbf{1}^T - \frac{1}{n^2}  \left( \mathbf{1}^T S(Y) \mathbf{1} \right) \mathbf{1} \mathbf{1}^T + \frac{2}{n}  \mathbf{1} \mathbf{1}^T S(Y) - \frac{1}{n^2} \mathbf{1} \mathbf{1}^T \left( \mathbf{1}^T S(Y) \mathbf{1} \right) \right) \\
        &= S(Y) - \frac{1}{n} S(Y) \mathbf{1} \mathbf{1}^T - \frac{1}{n} \mathbf{1} \mathbf{1}^T S(Y) + \frac{1}{n^2} \mathbf{1} \mathbf{1}^T S(Y) \mathbf{1} \mathbf{1}^T \\ 
        &= (I - \frac{1}{n} \mathbf{1} \mathbf{1}^T) S(Y) (I - \frac{1}{n} \mathbf{1} \mathbf{1}^T)
    \end{align*}
    This proves the lemma, and we arrive at the desired results
    \begin{align*}
        X = Q S(Y) Q , \quad \text{such that } Q = I - \frac{1}{n} \mathbf{1} \mathbf{1}^T .
    \end{align*}
\end{proof}

\section{Incorporating ASTRA subroutines in online algorithms}
\label{sec:ASTRA-incorporating}

As explained in Section~\ref{sec:online-derivation}, in this section, we show how the ASTRA subroutines, leveraging ADMM, are incorporated in the online algorithms, OGD and OCG. The online algorithms presented here follow the general structure presented in~\cite{hazan2016introduction}.

\paragraph*{Online Gradient Descent} Algorithm~\ref{alg:astra-ogd} summarizes the structure of OGD with the associated ADMM subroutine (Algorithm~\ref{alg:admm-based-projection-ogd}) incorporated. Here, the subroutine is responsible for projecting the gradient step with respect to the newly revealed cost function $f_t(X) = \frac{1}{2} \|(X - P_t) \odot U_t\|_F^2 - \mathbf{Tr}(\Lambda_t \odot X)$, where $X$ is constrained as in Problem~\eqref{eq:net-top-optimization-problem}, and $P_t, U_t, \Lambda_t$ denote the (possibly) time-varying matrices.

\begin{algorithm}[H]
\caption{ASTRA-OGD}
\label{alg:astra-ogd}
\begin{algorithmic}[1]
    \REQUIRE Initial point $X_1$, horizon $T \geq 1$, step-sizes $\eta_t$, for $t = 1, \dots, T$.
    
    \FOR{$t = 1, \dots, T$} 
    
        \STATE Output $X_t$
        
        \STATE Observe $f_t$ and accumulate loss $f_t(X_t)$

        \STATE $\overline{X}_t = X_t - \eta_t \nabla f_t(X_t)$
        
        \STATE $X_{t+1} =$ Algorithm~\ref{alg:admm-based-projection-ogd} with $\overline{X}_t$ as input
    \ENDFOR
\end{algorithmic}
\end{algorithm}

\paragraph*{Online Conditional Gradient} Algorithm~\ref{alg:astra-ocg}, summarizes the structure of OCG with the associated ADMM subroutine (Algorithm~\ref{alg:admm-based-linear-ocg}) incorporated, which is responsible for solving the linear optimization problem
\begin{align*}
    \underset{X}{\text{minimize}} &\quad \langle \nabla F_t(X_t), X \rangle_F \\
    \text{subject to} &\quad X=X^T , \quad X \mathbf{1} = \mathbf{0} , \quad B^{\text{lower}} \leq X \leq B^{\text{upper}} ,
\end{align*}
where $\nabla F_t(X_t)$ denotes the gradient of the function $F_t$ evaluated at the current estimate, where $F_t(X) \coloneqq \eta \sum_{\tau=1}^{t} \langle \nabla f_{\tau}(X_{\tau}) , X \rangle + \frac{1}{2} \|X - X_1\|_F^2$ is the cumulative sum of the cost functions until now plus a regularization term.

\begin{algorithm}[H]
\caption{ASTRA-OCG}
\label{alg:astra-ocg}
\begin{algorithmic}[1]
    \REQUIRE Initial point $X_1$, horizon $T \geq 1$, parameter $\eta$, $\sigma_t$, for $t = 1, \dots, T$.
    
    \FOR{$t = 1, \dots, T$} 
    
        \STATE Output $X_t$
        
        \STATE Observe $f_t$ and accumulate loss $f_t(X_t)$

        \STATE $F_t(X) = \eta \sum_{\tau=1}^{t} \langle \nabla f_{\tau}(X_{\tau}) , X \rangle + \frac{1}{2} \|X - X_1\|_F^2$

        \STATE $\nabla F_t(X_t) = \eta \sum_{\tau=1}^{t} \nabla f_{\tau}(X_{\tau}) + (X_t - X_1) $
        
        \STATE $R_t =$ Algorithm~\ref{alg:admm-based-linear-ocg} with $\nabla F_t(X_t)$ as input

        \STATE $X_{t+1} = (1 - \sigma_t) X_t + \sigma_t R_t$
    
    \ENDFOR
\end{algorithmic}
\end{algorithm}

\section{ASTRA projection for OGD}
\label{sec:admm-based-projection-for-ogd}

The projection of the gradient step $\overline{X}_t$ in Algorithm~\ref{alg:astra-ogd} onto, the set $K \cap B$, where $K \coloneqq \{X : X = X^T, X \mathbf{1} = \mathbf{0} \}$ and $B = \{X : B^{\text{lower}} \leq X \leq B^{\text{upper}} \}$ can be seen as solving the convex optimization problem
\begin{equation}
    \label{eq:projection-K-opt-problem}
    \begin{aligned}
        \underset{X}{\text{minimize}} &\quad \frac{1}{2} \left\|X - \overline{X}_t \right\|^2_F \\
        \text{subject to} &\quad X=X^T , \quad X \mathbf{1} = \mathbf{0}, \quad B^{\text{lower}}\le X \le B^{\text{upper}} .
    \end{aligned}
\end{equation}
Similarly to the derivation in Section~\ref{sec:admm-based-efficient-implementation}, we can express Problem~\eqref{eq:projection-K-opt-problem} in ADMM form as
\begin{equation}
    \begin{aligned}
        \underset{X, Z}{\text{minimize}} \quad f(X) + g(Z) \quad \text{subject to} \quad X - Z = \mathbf{0_{n \times n}} ,
    \end{aligned}
\end{equation}
where $\mathbf{0_{n \times n}}$ denotes the all-zeros $n \times n$ matrix, $g(Z) = i_{B}(Z)$ is the indicator function of set \(B \coloneqq \{X : B^{\text{lower}} \leq X \leq B^{\text{upper}} \}\), but
\begin{align*}
    &f(X) = \frac{1}{2} \left\|X - \overline{X}_t \right\|^2_F , \quad \text{dom} f = \{X : X = X^T, X \mathbf{1} = \mathbf{0} \} .
\end{align*}
Thus, as in Section~\ref{sec:admm-based-efficient-implementation}, the augmented Lagrangian (in the scaled form~\citep{boyd2011admm}) is given by $L_{\rho}(X,Z,V) = f(X) + g(Z) + \frac{\rho}{2} \left\| X - Z + V \right\|_F^2$, where $V$ is the scaled dual variable, and the general formulation of ADMM for this problem is given by
\begin{equation}
    \begin{aligned}
        X^{k+1} &= \underset{X \in \text{dom} f}{\arg\min} \left\{ f(X) + \frac{\rho}{2} \left\| X - Z^k + V^k \right\|_F^2 \right\} \\
        Z^{k+1} &= \underset{Z \in \mathbb{R}^{n \times n}}{\arg\min} \left\{ g(Z) + \frac{\rho}{2} \left\| X^{k+1} - Z + V^k \right\|_F^2 \right\} \\
        V^{k+1} &= V^k + \left( X^{k+1} - Z^{k+1} \right)
    \end{aligned}
\end{equation}

\paragraph*{Update over the $X$ variable.} The update over $X$ can be expressed as
\begin{equation}
    \label{eq:online-X-update}
    \begin{aligned}
        \underset{X}{\text{minimize}} &\quad \phi(X) \coloneqq \frac{1}{2} \left\|X - \overline{X}_t \right\|^2_F + \frac{\rho}{2} \left\| X - Z^k + V^k \right\|_F^2 \\
        \text{subject to} &\quad X=X^T \\
        &\quad X \mathbf{1} = \mathbf{0} .
    \end{aligned}
\end{equation}
Expanding the function $\phi$, we have
\begin{align*}
    \phi(X) = \frac{1}{2} \left[ (1 + \rho) \|X\|_F^2 - 2 \langle X, \overline{X}_t + \rho (Z^k - V^k) \rangle + C_1 \right] ,
\end{align*}
where $C_1 = \|\overline{X}_t\|_F^2 + \rho \|Z^k - V^k\|_F^2$ is a constant term.
Now, consider for a matrix $W$ the quadratic function
\begin{align*}
    (1 + \rho) \| X - W \|_F^2 = (1+\rho) \|X\|_F^2 - 2 (1+\rho) \langle X, W \rangle + (1+\rho) \|W\|_F^2
\end{align*}
Letting $W = \frac{\overline{X}_t + \rho (Z^k - V^k)}{1 + \rho}$, we have
\begin{align*}
    (1 + \rho) \| X - W \|_F^2 = (1+\rho) \|X\|_F^2 - 2 \langle X, \overline{X}_t + \rho (Z^k - V^k) \rangle + C_2
\end{align*}
where $C_2 = \frac{1}{1+\rho} \|\overline{X}_t + \rho (Z^k - V^k)\|_F^2$. Therefore, we see that we can rewrite $\phi$ as
\begin{align*}
    \phi(X) = \frac{1 + \rho}{2} \| X - W \|_F^2 + h(C_1,C_2),
\end{align*}
where $h$ is a function of the constant terms $C_1$ and $C_2$. Since the minimizer is not affected by the addition of constant terms, we see that Problem~\eqref{eq:online-X-update} reduces to the projection of matrix $W = \frac{\overline{X}_t + \rho (Z^k - V^k)}{1 + \rho}$ onto set $K = \{X : X = X^T , X \mathbf{1} = \mathbf{0} \}$, which we know has a closed-form solution by Lemma~\ref{lemma:projection-equalities}. Since $g$ is the indicator function of set \(B \coloneqq \{X : B^{\text{lower}} \leq X \leq B^{\text{upper}} \}\), the update of the $Z$ variable is given by Eq.~\eqref{eq:matrix-box-projection}, thus we arrive at the final form of Algorithm~\ref{alg:admm-based-projection-ogd}.

\section{ASTRA linear optimization for OCG}
\label{sec:admm-based-solution-ocg-linear-opt}

The update step in Algorithm~\ref{alg:astra-ocg} to obtain $R_t$ can be formulated as the following optimization problem
\begin{equation}
    \label{eq:linear-opt-problem-ocg}
    \begin{aligned}
            \underset{X}{\text{minimize}} &\quad \langle \nabla F(X_t), X \rangle_F \\
            \text{subject to} &\quad X=X^T, \quad X \mathbf{1} = \mathbf{0} , \quad B^{\text{lower}} \leq X \leq B^{\text{upper}} .
    \end{aligned}
\end{equation}
Similarly to the derivation in Section~\ref{sec:admm-based-efficient-implementation}, we can express Problem~\eqref{eq:linear-opt-problem-ocg} in ADMM form as
\begin{equation}
    \begin{aligned}
        \underset{X, Z}{\text{minimize}} \quad f(X) + g(Z) \quad \text{subject to} \quad X - Z = \mathbf{0_{n \times n}} ,
    \end{aligned}
\end{equation}
where $\mathbf{0_{n \times n}}$ denotes the all-zeros $n \times n$ matrix, $g(Z) = i_{B}(Z)$ is the indicator function of set \(B \coloneqq \{X : B^{\text{lower}} \leq X \leq B^{\text{upper}} \}\), but
\begin{align*}
    &f(X) = \langle \nabla F(X_t), X \rangle , \quad \text{dom} f = \{X : X = X^T, X \mathbf{1} = \mathbf{0} \} .
\end{align*}
Thus, as in Section~\ref{sec:admm-based-efficient-implementation}, the augmented Lagrangian (in the scaled form~\citep{boyd2011admm}) is given by $L_{\rho}(X,Z,V) = f(X) + g(Z) + \frac{\rho}{2} \left\| X - Z + V \right\|_F^2$, where $V$ is the scaled dual variable, and the general formulation of ADMM for this problem is given by
\begin{equation}
    \begin{aligned}
        X^{k+1} &= \underset{X \in \text{dom} f}{\arg\min} \left\{ f(X) + \frac{\rho}{2} \left\| X - Z^k + V^k \right\|_F^2 \right\} \\
        Z^{k+1} &= \underset{Z \in \mathbb{R}^{n \times n}}{\arg\min} \left\{ g(Z) + \frac{\rho}{2} \left\| X^{k+1} - Z + V^k \right\|_F^2 \right\} \\
        V^{k+1} &= V^k + \left( X^{k+1} - Z^{k+1} \right)
    \end{aligned}
\end{equation}
\paragraph*{Update over the $X$ variable.} The update over $X$ becomes

\begin{equation}
    \label{eq:online-X-update-ocg}
    \begin{aligned}
        \underset{X}{\text{minimize}} &\quad \phi(X) \coloneqq \langle \nabla F(X_t), X \rangle + \frac{\rho}{2} \left\| X - Z^k + V^k \right\|_F^2 \\
        \text{subject to} &\quad X=X^T, \quad X \mathbf{1} = \mathbf{0}
    \end{aligned}
\end{equation}
Expanding the function $\phi$, we have
\begin{align*}
    \phi(X) = \frac{1}{2} \left[ \rho \|X\|_F^2 - 2 \left\langle X, \rho (Z^k - V^k) - \nabla F_t(X_t) \right\rangle + C_1 \right]
\end{align*}
where $C_1 = \rho \|Z^k - V^k\|_F^2$ is a constant term. As in Appendix~\ref{sec:admm-based-projection-for-ogd}, consider for a matrix $W$ the quadratic function
\begin{align*}
    \rho \| X - W \|_F^2 = \rho \|X\|_F^2 - 2 \rho \langle X, W \rangle + \rho \|W\|_F^2
\end{align*}
Letting $$W = \frac{\rho (Z^k - V^k) - \nabla F_t(X_t)}{\rho} = (Z^k - V^k) - \frac{\nabla F_t(X_t)}{\rho} ,$$ we have
\begin{align*}
    \rho \| X - W \|_F^2 = \rho \|X\|_F^2 - 2 \left\langle X, \rho (Z^k - V^k) -  \nabla F_t(X_t) \right\rangle + C_2
\end{align*}
where $C_2 = \rho \left\| (Z^k - V^k) - \frac{\nabla F_t(X_t)}{\rho} \right\|_F^2$. Therefore, similarly to the solution for Online Gradient Descent, we see that we can rewrite $\phi$ as
\begin{align*}
    \phi(X) = \frac{\rho}{2} \| X - W \|_F^2 + h(C_1,C_2),
\end{align*}
where $h$ is a function of the constant terms $C_1$ and $C_2$. Since the minimizer is not affected by the addition of constant terms, we see that Problem~\eqref{eq:online-X-update-ocg} reduces to the projection of matrix $W = (Z^k - V^k) - \frac{\nabla F_t(X_t)}{\rho}$ onto set $K = \{X : X = X^T , X \mathbf{1} = \mathbf{0} \}$, which we know has a closed-form solution by Lemma~\ref{lemma:projection-equalities}. Since $g$ is the indicator function of set \(B \coloneqq \{X : B^{\text{lower}} \leq X \leq B^{\text{upper}} \}\), the update of the $Z$ variable is given by Eq.~\eqref{eq:matrix-box-projection}, thus we arrive at the final form of Algorithm~\ref{alg:admm-based-linear-ocg}.

\section{Dynamic regret with infeasible inner-loop iterates via feasible shadows}
\label{sec:dynamic-regret-feasible-shadows}

Before proving the dynamic regret bounds for infeasible inner-loop iterates, we will start by defining some notation and assumptions. Consider the sets $B \coloneqq \{X \in \mathbb{R}^{n \times n} : B^{\text{lower}} \leq X \leq B^{\text{upper}} \}$ and $K \coloneqq \{X \in \mathbb{R}^{n \times n} : X = X^T, X \mathbf{1} = \mathbf{0} \}$. Therefore, $\mathcal C := K \cap B \subset \mathbb R^{n\times n}$ denotes the feasible set of Problem~\eqref{eq:net-top-optimization-problem}. Furthermore, as introduced in Section~\ref{sec:oco_introd}, $P(U_1, \dots, U_T) \coloneqq \sum_{t=1}^{T-1} \|U_t - U_{t+1}\|_F + 1$ denotes the \emph{path length} of the dynamic comparator sequence $U_1, \dots, U_T$.
\begin{assumption}
    \label{assumption:diameter}
    Let $\mathcal C \coloneqq K \cap B \subset \mathbb R^{n\times n}$ be nonempty, convex, and compact. Therefore, there exists $D_{\mathcal C} \coloneqq \sup_{X,Z \in \mathcal C} \|X-Z\|_F$, which denotes the diameter with respect to the Frobenius norm.
\end{assumption}
\begin{assumption}
    \label{assumption:bounded-gradient}
    At each round $t$, the cost function $f_t : \mathbb{R}^{n \times n} \to \mathbb{R}$ is convex and differentiable, and satisfies $\| \nabla f_t(X) \|_F \leq L$, $\forall X \in \mathbb{R}^{n \times n}$.
\end{assumption}
Note that Assumption~\ref{assumption:bounded-gradient} implies that $| f_t( X ) - f_t(Y) | \leq L \left\| X - Y \right\|$, for all $X, Y \in \mathbb{R}^{n \times n}$ and $t \geq 1$ (as a consequence, e.g.,  of the mean-value theorem \citep[Theorem 2.3.3]{HiriartUrruty1993}).

\begin{lemma}[One-step inexact-projection inequality]
\label{lemma:one-step-inexact-projection}
Assume that Assumptions~\ref{assumption:diameter} and~\ref{assumption:bounded-gradient} hold. Let $G_t := \nabla f_t(\bar W_t)$ denote the gradient of $f_t$ evaluated at point $\bar W_t \in \mathcal{C}$, and define
$$
Y_t := \bar W_t - \eta_t G_t,
\qquad
P_t := \Pi_{\mathcal C}(Y_t),
$$
where $\Pi_{\mathcal C}(\cdot)$ is the unique projection onto $\mathcal C$.
Assume that the implemented update returns a feasible point $\bar W_{t+1} \in \mathcal C$ such that
\begin{equation}
    \label{eq:inexact-assumption}
    \begin{aligned}
        \|\bar W_{t+1}-P_t\|_F \le e_t .
    \end{aligned}
\end{equation}
Then, for any comparator $U_t \in \mathcal C$,
\begin{equation}
    \label{eq:inexact-feasible-instantaneous-regret}
    \begin{aligned}
        f_t(\bar W_t)-f_t(U_t)
        \leq
        \frac{\|\bar W_t-U_t\|_F^2-\|\bar W_{t+1}-U_t\|_F^2}{2\eta_t}
        +
        \frac{\eta_t}{2}\|G_t\|_F^2
        +
        \frac{D_{\mathcal C}}{\eta_t}e_t
        +
        \frac{e_t^2}{2\eta_t}.
    \end{aligned}
\end{equation}
\end{lemma}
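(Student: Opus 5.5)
The plan is to follow the standard OGD analysis: first bound the instantaneous regret through convexity, then control the distance to the comparator through the exact projection $P_t$, and finally pay for the gap between $P_t$ and the implemented point $\bar W_{t+1}$.

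By convexity of $f_t$ (Assumption~\ref{assumption:bounded-gradient}),
\[
f_t(\bar W_t)-f_t(U_t)\le \langle G_t,\bar W_t-U_t\rangle_F .
\]

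Next, expand $\|Y_t-U_t\|_F^2=\|\bar W_t-U_t\|_F^2-2\eta_t\langle G_t,\bar W_t-U_t\rangle_F+\eta_t^2\|G_t\|_F^2$. Since $U_t\in\mathcal C$, which is closed and convex by Assumption~\ref{assumption:diameter}, the Pythagorean (nonexpansiveness) property of $\Pi_{\mathcal C}$ gives $\|P_t-U_t\|_F\le\|Y_t-U_t\|_F$. Combining the two yields
\[
\langle G_t,\bar W_t-U_t\rangle_F\le \frac{\|\bar W_t-U_t\|_F^2-\|P_t-U_t\|_F^2}{2\eta_t}+\frac{\eta_t}{2}\|G_t\|_F^2 .
\]
This is the exact-projection inequality.

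The remaining step, which is the only nonroutine one, is to replace $\|P_t-U_t\|_F^2$ by $\|\bar W_{t+1}-U_t\|_F^2$. Set $\Delta=\bar W_{t+1}-P_t$, so $\|\Delta\|_F\le e_t$ by~\eqref{eq:inexact-assumption}. Expanding the square gives
\[
\|\bar W_{t+1}-U_t\|_F^2=\|P_t-U_t\|_F^2+2\langle \Delta,P_t-U_t\rangle_F+\|\Delta\|_F^2 .
\]
Hence
\[
-\|P_t-U_t\|_F^2\le -\|\bar W_{t+1}-U_t\|_F^2+2\|P_t-U_t\|_F\,e_t+e_t^2 .
\]
Both $P_t$ and $U_t$ lie in $\mathcal C$, so $\|P_t-U_t\|_F\le D_{\mathcal C}$. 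This feasibility of $P_t$ is what makes the diameter bound available; it is the reason the statement uses the feasible shadow $P_t$ rather than the raw iterate $Y_t$.

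Substituting this into the exact-projection inequality and dividing by $2\eta_t$ produces exactly the extra terms $\frac{D_{\mathcal C}}{\eta_t}e_t+\frac{e_t^2}{2\eta_t}$, which gives~\eqref{eq:inexact-feasible-instantaneous-regret}. The obstacle is therefore mild: the only care needed is to expand around $P_t$ rather than around $\bar W_{t+1}$, so that the cross term pairs $\Delta$ with a difference of two feasible points and can be bounded by the diameter.
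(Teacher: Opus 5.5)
Your proof is correct and is essentially the paper's argument: convexity, then nonexpansiveness of $\Pi_{\mathcal C}$ against $U_t\in\mathcal C$, then paying $2D_{\mathcal C}e_t+e_t^2$ to replace $\|P_t-U_t\|_F^2$ by $\|\bar W_{t+1}-U_t\|_F^2$. Your square expansion with Cauchy--Schwarz is equivalent to the paper's triangle inequality followed by squaring. One correction to your closing remark: since $\bar W_{t+1}\in\mathcal C$ is assumed, expanding around $\bar W_{t+1}$ also works, because dropping the nonnegative $\|\bar W_{t+1}-P_t\|_F^2$ term gives $\|P_t-U_t\|_F^2\ge\|\bar W_{t+1}-U_t\|_F^2-2D_{\mathcal C}e_t$. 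That route yields the bound even without the $e_t^2/(2\eta_t)$ term, so expanding around $P_t$ is not the step that requires care.
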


\begin{proof}
By convexity of $f_t$, for any $U_t \in \mathcal C$,
$$
f_t(\bar W_t)-f_t(U_t)
\le
\langle G_t, \bar W_t-U_t\rangle_F.
$$
Thus, it suffices to upper-bound the Frobenius inner product on the right. Since $P_t = \Pi_{\mathcal C}(Y_t)$ is the Euclidean projection of $Y_t$ onto $\mathcal C$, and
$U_t \in \mathcal C$, we have
\begin{equation}
    \label{eq:non-expansiveness-projection}
    \begin{aligned}
        \|P_t-U_t\|_F \le \|Y_t-U_t\|_F.
    \end{aligned}
\end{equation}
Using the inexactness assumption in~\eqref{eq:inexact-assumption} and the triangle inequality,
$$
\|\bar W_{t+1}-U_t\|_F
\le
\|P_t-U_t\|_F + \|\bar W_{t+1}-P_t\|_F
\le
\|P_t-U_t\|_F + e_t.
$$
Squaring both sides yields
$$
\|\bar W_{t+1}-U_t\|_F^2
\le
\|P_t-U_t\|_F^2 + 2e_t\|P_t-U_t\|_F + e_t^2.
$$
Because $\bar W_{t+1},P_t,U_t \in \mathcal C$ and $\mathcal C$ has diameter $D_{\mathcal C}$ by Assumption~\ref{assumption:diameter}, we have
$\|P_t-U_t\|_F \le D_{\mathcal C}$, hence
\begin{equation}
    \label{eq:inexact-step-first-bound}
    \begin{aligned}
        \|\bar W_{t+1}-U_t\|_F^2
        \le
        \|P_t-U_t\|_F^2 + 2D_{\mathcal C}e_t + e_t^2
        \le
        \|Y_t-U_t\|_F^2 + 2D_{\mathcal C}e_t + e_t^2.
    \end{aligned}
\end{equation}
where the last inequality results from~\eqref{eq:non-expansiveness-projection}. Now, expand the square:
\begin{equation}
    \label{eq:non-expansive-square-expansion}
    \begin{aligned}
        \|Y_t-U_t\|_F^2
        =
        \|\bar W_t-\eta_t G_t-U_t\|_F^2
        =
        \|\bar W_t-U_t\|_F^2 - 2\eta_t\langle G_t,\bar W_t-U_t\rangle_F + \eta_t^2\|G_t\|_F^2.
    \end{aligned}
\end{equation}
Combining~\eqref{eq:inexact-step-first-bound} and~\ref{eq:non-expansive-square-expansion} gives
$$
\|\bar W_{t+1}-U_t\|_F^2
\le
\|\bar W_t-U_t\|_F^2
-2\eta_t\langle G_t,\bar W_t-U_t\rangle_F
+\eta_t^2\|G_t\|_F^2
+2D_{\mathcal C}e_t+e_t^2.
$$
Rearranging,
$$
\langle G_t,\bar W_t-U_t\rangle_F
\le
\frac{\|\bar W_t-U_t\|_F^2-\|\bar W_{t+1}-U_t\|_F^2}{2\eta_t}
+
\frac{\eta_t}{2}\|G_t\|_F^2
+
\frac{D_{\mathcal C}}{\eta_t}e_t
+
\frac{e_t^2}{2\eta_t}.
$$
Substituting this bound into the initial convexity inequality proves the claim.
\end{proof}

\begin{theorem}[Dynamic regret with infeasible inner-loop iterates via feasible shadows]
\label{thm:dyn-reg}
Assume that Assumptions~\ref{assumption:diameter} and~\ref{assumption:bounded-gradient} hold. Let $W_t \in \mathbb R^{n\times n}$ denote the possibly infeasible matrix played at round $t$, and define its feasible shadow
$$
\bar W_t := \Pi_{\mathcal C}(W_t).
$$
Fix a constant step size $\eta > 0$, and define the exact projected-gradient target
$$
Q_t := \Pi_{\mathcal C}\!\bigl(\bar W_t - \eta \nabla f_t(\bar W_t)\bigr).
$$
Let
$$
v_t := \|W_t-\bar W_t\|_F
=
\operatorname{dist}(W_t,\mathcal C)
$$
denote the infeasibility of the played iterate, and assume that the shadow sequence satisfies
$$
\|\bar W_{t+1}-Q_t\|_F \le e_t
\qquad
\forall t \in \{1,\dots,T\}.
$$
Then, for any comparator sequence $U_1,\dots,U_T \in \mathcal C$, the dynamic regret with respect to the sequence of possibly infeasible matrices $W_t$ is bounded by
$$
\sum_{t=1}^T \bigl(f_t(W_t)-f_t(U_t)\bigr)
\le
L \sum_{t=1}^T v_t
+
\frac{D_{\mathcal C}^2}{2\eta}
+
\frac{D_{\mathcal C}}{\eta} P(U_1, \dots, U_T)
+
\frac{\eta L^2T}{2}
+
\frac{D_{\mathcal C}}{\eta}\sum_{t=1}^T e_t
+
\frac{1}{2\eta}\sum_{t=1}^T e_t^2.
$$
\end{theorem}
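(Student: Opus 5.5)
Since every $W_t$ sits within distance $v_t$ of its shadow $\bar W_t$, the plan is to pay $L v_t$ per round to move from $W_t$ to $\bar W_t$, and then bound the regret of the shadow sequence with Lemma~\ref{lemma:one-step-inexact-projection}. A telescoping argument adapted to a moving comparator finishes the proof.

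\textbf{Step 1: reduce to the shadows.} For each round write $f_t(W_t)-f_t(U_t) = \bigl(f_t(W_t)-f_t(\bar W_t)\bigr) + \bigl(f_t(\bar W_t)-f_t(U_t)\bigr)$. The remark after Assumption~\ref{assumption:bounded-gradient} says $f_t$ is $L$-Lipschitz on all of $\mathbb R^{n\times n}$, so the first bracket is at most $L\|W_t-\bar W_t\|_F = L v_t$. Summing over $t$ gives the term $L\sum_t v_t$.

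\textbf{Step 2: one-step inequality for the shadows.} Apply Lemma~\ref{lemma:one-step-inexact-projection} with $\eta_t=\eta$. The hypotheses match: $\bar W_t\in\mathcal C$, $P_t = Q_t$, and $\|\bar W_{t+1}-Q_t\|_F\le e_t$ with $\bar W_{t+1}\in\mathcal C$. This gives
\[
f_t(\bar W_t)-f_t(U_t)\le \frac{\|\bar W_t-U_t\|_F^2-\|\bar W_{t+1}-U_t\|_F^2}{2\eta}+\frac{\eta}{2}L^2+\frac{D_{\mathcal C}}{\eta}e_t+\frac{e_t^2}{2\eta},
\]
where $\|G_t\|_F\le L$ by Assumption~\ref{assumption:bounded-gradient}. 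Summing over $t$ produces $\eta L^2T/2$ together with the two $e_t$ sums stated in the theorem. For $t=T$ the point $\bar W_{T+1}$ is defined as the projection of $W_{T+1}$; we only need it to lie in $\mathcal C$, or we can simply drop the last negative term.

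\textbf{Step 3: telescoping with a moving comparator.} This is the main obstacle, because the comparator changes from round to round. Write
\[
\sum_{t=1}^T\bigl(\|\bar W_t-U_t\|_F^2-\|\bar W_{t+1}-U_t\|_F^2\bigr) \le \|\bar W_1-U_1\|_F^2+\sum_{t=1}^{T-1}\bigl(\|\bar W_{t+1}-U_{t+1}\|_F^2-\|\bar W_{t+1}-U_t\|_F^2\bigr),
\]
having dropped the final term $-\|\bar W_{T+1}-U_T\|_F^2$. Each difference in the sum is of the form $\|a\|^2-\|b\|^2=\langle a-b,a+b\rangle$ with $a-b = U_t-U_{t+1}$. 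Cauchy--Schwarz together with $\|a\|_F,\|b\|_F\le D_{\mathcal C}$ bounds it by $2D_{\mathcal C}\|U_t-U_{t+1}\|_F$. The first term is at most $D_{\mathcal C}^2$.

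Dividing by $2\eta$ yields $\frac{D_{\mathcal C}^2}{2\eta}+\frac{D_{\mathcal C}}{\eta}\sum_{t=1}^{T-1}\|U_t-U_{t+1}\|_F$. This is at most $\frac{D_{\mathcal C}^2}{2\eta}+\frac{D_{\mathcal C}}{\eta}P(U_1,\dots,U_T)$, since $P$ equals the sum plus $1$ and the extra $+1$ only loosens the bound. Adding the contributions from Steps 1--3 gives exactly the stated bound.
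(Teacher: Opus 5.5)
Your proposal is correct and matches the paper's proof step for step. It uses the same $L v_t$ reduction to the feasible shadows, the same application of Lemma~\ref{lemma:one-step-inexact-projection} with $\eta_t=\eta$, and the same moving-comparator telescoping that drops the final nonpositive term. The only cosmetic difference is that you bound each cross-round difference via $\|a\|_F^2-\|b\|_F^2=\langle a-b,a+b\rangle_F$, whereas the paper expands $\bar W_{t+1}-U_t=(\bar W_{t+1}-U_{t+1})+(U_{t+1}-U_t)$ and applies Cauchy--Schwarz; both yield $2D_{\mathcal C}\|U_{t+1}-U_t\|_F$.
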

\begin{proof}
For each $t$, decompose the regret term as
$$
f_t(W_t)-f_t(U_t)
=
\bigl(f_t(W_t)-f_t(\bar W_t)\bigr)
+
\bigl(f_t(\bar W_t)-f_t(U_t)\bigr).
$$
By Assumption~\ref{assumption:bounded-gradient}, $f_t(W_t)-f_t(\bar W_t) \leq L \|W_t-\bar W_t\|_F = L v_t$. Hence,
$$
f_t(W_t)-f_t(U_t)
\leq
L v_t + \bigl(f_t(\bar W_t)-f_t(U_t)\bigr).
$$
Summing from $t=1$ to $T$ yields
$$
\sum_{t=1}^T \bigl(f_t(W_t)-f_t(U_t)\bigr)
\le
L \sum_{t=1}^T v_t
+
\sum_{t=1}^T \bigl(f_t(\bar W_t)-f_t(U_t)\bigr).
$$
It therefore remains to bound the regret of the feasible shadow sequence $(\bar W_t)$. Since $\bar W_t,U_t,Q_t \in \mathcal C$ and
$$
Q_t = \Pi_{\mathcal C}\!\bigl(\bar W_t-\eta \nabla f_t(\bar W_t)\bigr),
\qquad
\|\bar W_{t+1}-Q_t\|_F \le e_t,
$$
Applying Lemma~\ref{lemma:one-step-inexact-projection} applied to the feasible sequence $(\bar W_t)$ gives
$$
f_t(\bar W_t)-f_t(U_t)
\le
\frac{\|\bar W_t-U_t\|_F^2-\|\bar W_{t+1}-U_t\|_F^2}{2\eta}
+
\frac{\eta}{2}\|\nabla f_t(\bar W_t)\|_F^2
+
\frac{D_{\mathcal C}}{\eta}e_t
+
\frac{e_t^2}{2\eta}.
$$
By Assumption~\ref{assumption:bounded-gradient}, $\|\nabla f_t(\bar W_t)\|_F \le L$, and summing from $t=1$ to $T$,
\begin{equation}
    \label{eq:second-term-regret}
    \begin{aligned}
        \sum_{t=1}^T \bigl(f_t(\bar W_t)-f_t(U_t)\bigr)
        &\le
        \frac{1}{2\eta}
        \sum_{t=1}^T
        \Bigl(
        \|\bar W_t-U_t\|_F^2-\|\bar W_{t+1}-U_t\|_F^2
        \Bigr)
        +
        \frac{\eta L^2T}{2} \\
        &\qquad
        +
        \frac{D_{\mathcal C}}{\eta}\sum_{t=1}^T e_t
        +
        \frac{1}{2\eta}\sum_{t=1}^T e_t^2.
    \end{aligned}
\end{equation}
To control the telescoping term with moving comparators, write
$$
\bar W_{t+1}-U_t = (\bar W_{t+1}-U_{t+1}) + (U_{t+1}-U_t).
$$
Then
\begin{align*}
\|\bar W_{t+1}-U_t\|_F^2
&=
\|\bar W_{t+1}-U_{t+1}\|_F^2
+
2\langle \bar W_{t+1}-U_{t+1},\,U_{t+1}-U_t\rangle_F
+
\|U_{t+1}-U_t\|_F^2 \\
&\ge
\|\bar W_{t+1}-U_{t+1}\|_F^2
-
2\|\bar W_{t+1}-U_{t+1}\|_F\,\|U_{t+1}-U_t\|_F \\
&\ge
\|\bar W_{t+1}-U_{t+1}\|_F^2
-
2D_{\mathcal C}\|U_{t+1}-U_t\|_F,
\end{align*}
because $\bar W_{t+1},U_{t+1}\in \mathcal C$.
Therefore,
$$
\|\bar W_t-U_t\|_F^2-\|\bar W_{t+1}-U_t\|_F^2
\le
\|\bar W_t-U_t\|_F^2-\|\bar W_{t+1}-U_{t+1}\|_F^2
+
2D_{\mathcal C}\|U_{t+1}-U_t\|_F.
$$
Summing this inequality from $t=1$ to $T$ and dropping the final nonpositive term gives
\begin{align*}
    \sum_{t=1}^T
    \Bigl(
    \|\bar W_t-U_t\|_F^2-\|\bar W_{t+1}-U_t\|_F^2
    \Bigr)
    &\le
    \|\bar W_1-U_1\|_F^2
    +
    2D_{\mathcal C}\sum_{t=1}^{T-1}\|U_{t+1}-U_t\|_F \\
    &\leq D_{\mathcal C}^2
    +
    2D_{\mathcal C} \, P(U_1, \dots, U_T)
\end{align*}
where the last inequality results from Assumption~\ref{assumption:diameter}. Substituting back in~\eqref{eq:second-term-regret} yields
$$
\sum_{t=1}^T \bigl(f_t(\bar W_t)-f_t(U_t)\bigr)
\le
\frac{D_{\mathcal C}^2}{2\eta}
+
\frac{D_{\mathcal C}}{\eta} P(U_1, \dots, U_T)
+
\frac{\eta L^2T}{2}
+
\frac{D_{\mathcal C}}{\eta}\sum_{t=1}^T e_t
+
\frac{1}{2\eta}\sum_{t=1}^T e_t^2.
$$
Combining this with the earlier decomposition proves the claim.
\end{proof}

\begin{corollary}[ADMM iterate corollary]
\label{cor:addm-iterate-corollary}
Assume the setting of Theorem~\ref{thm:dyn-reg}, and suppose that,
at round $t$, the inner ADMM loop returns matrices
$$
X_t^{k} \in K,
\qquad
Z_t^{k} \in B,
$$
after $k$ inner iterations. Define the played iterate as
$$
W_t := \lambda X_t^{k} + (1 - \lambda) Z_t^{k},
$$
a convex combination of $X_t^{k}$ and $Z_t^{k}$ for any $\lambda \in [0,1]$. Let $r_t^{k} := X_t^{k} - Z_t^{k}$ denote the ADMM primal residual. Furthermore, assume that the pair $(K,B)$ is boundedly linearly regular on a bounded set
containing the iterates, i.e., there exists $\kappa > 0$ such that
$$
\operatorname{dist}(W, K \cap B)
\le
\kappa\bigl(\operatorname{dist}(W,K) + \operatorname{dist}(W,B)\bigr)
$$
for all such $W$.\footnote{The set $K$ is affine and $B$ is a box, hence both are polyhedral. By applying the set-inclusion Hoffman bound of~\cite[Theorem 9]{burke1996hoffman} to the linear map $T(W) =(W,W)$ to the polyhedral set $K \times B$, equipped with the product norm $\|(U,V)\|=\|U\|+\|V\|$. Because the $T^{-1}(K \times B) = K \cap B$ and the the product norm is chosen so that the residual decomposes as the sum of the two distances, it holds that $\operatorname{dist}(W, K \cap B)
\le
\kappa\bigl(\operatorname{dist}(W,K) + \operatorname{dist}(W,B)\bigr)$.}
Then, for every $t$,
$$
v_t
=
\operatorname{dist}(W_t,\mathcal C)
\le
\kappa \|r_t^{k}\|_F.
$$
Consequently, if the shadow errors satisfy
$$
\|\bar W_{t+1} - \Pi_{\mathcal C}\!\bigl(\bar W_t - \eta \nabla f_t(\bar W_t)\bigr)\|_F
\le
e_t,
$$
then for any comparator sequence $U_1,\dots,U_T \in \mathcal C$,
\begin{align*}
    \sum_{t=1}^T \bigl(f_t(W_t)-f_t(U_t)\bigr)
    \le
    L\kappa\sum_{t=1}^T \|r_t^{k}\|_F
    +
    \frac{D_{\mathcal C}^2}{2\eta}
    +
    \frac{D_{\mathcal C}}{\eta} P(U_1, \dots, U_T)
    +
    \frac{\eta L^2T}{2}
    +
    \frac{D_{\mathcal C}}{\eta}\sum_{t=1}^T e_t
    +
    \frac{1}{2\eta}\sum_{t=1}^T e_t^2.
\end{align*}
\end{corollary}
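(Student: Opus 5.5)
The plan is to reduce the corollary to Theorem~\ref{thm:dyn-reg}. All terms in the final bound except the first already appear in that theorem, so it suffices to show $v_t \le \kappa\|r_t^k\|_F$ and then replace $L\sum_t v_t$ by $L\kappa\sum_t\|r_t^k\|_F$.

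To bound $v_t=\operatorname{dist}(W_t,\mathcal C)$, apply the bounded linear regularity hypothesis to $W=W_t$. We may do so because $W_t$ is a convex combination of iterates lying in the bounded set on which regularity is assumed; I would state that this set is taken convex, or at least contains these combinations. This gives $v_t\le\kappa(\operatorname{dist}(W_t,K)+\operatorname{dist}(W_t,B))$. The two distances are then bounded by using the feasible points at hand as witnesses:
\begin{align*}
\operatorname{dist}(W_t,K)&\le\|W_t-X_t^k\|_F=(1-\lambda)\|Z_t^k-X_t^k\|_F,\\
\operatorname{dist}(W_t,B)&\le\|W_t-Z_t^k\|_F=\lambda\|X_t^k-Z_t^k\|_F .
\end{align*}
These hold because $X_t^k\in K$ and $Z_t^k\in B$. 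Summing, the weights $(1-\lambda)+\lambda$ add to one, so $\operatorname{dist}(W_t,K)+\operatorname{dist}(W_t,B)\le\|r_t^k\|_F$ and hence $v_t\le\kappa\|r_t^k\|_F$. The convex-combination structure is exactly what makes the constant collapse to $\kappa$ rather than $2\kappa$.

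Finally, invoke Theorem~\ref{thm:dyn-reg} with the played iterates $W_t$, their shadows $\bar W_t=\Pi_{\mathcal C}(W_t)$, and the assumed shadow errors $e_t$; its hypotheses match verbatim. Substituting $L\sum_t v_t\le L\kappa\sum_t\|r_t^k\|_F$, which is valid since $L\ge 0$, yields the claimed bound.

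There is no real obstacle. The only delicate point is justifying that the regularity inequality applies at $W_t$, namely that the bounded set covers the convex combinations. That is handled by the standing assumption in the statement, with the existence of $\kappa$ supplied by the Hoffman-bound footnote.
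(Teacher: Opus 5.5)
Your proposal is correct and matches the paper's proof step for step. The witnesses $X_t^{k}\in K$ and $Z_t^{k}\in B$ give $\operatorname{dist}(W_t,K)\le(1-\lambda)\|r_t^{k}\|_F$ and $\operatorname{dist}(W_t,B)\le\lambda\|r_t^{k}\|_F$; bounded linear regularity then yields $v_t\le\kappa\|r_t^{k}\|_F$, and substituting this into Theorem~\ref{thm:dyn-reg} gives the stated bound. Your caveat about $W_t$ lying in the region where regularity holds is harmless: the paper simply applies the inequality at $W_t$, and for the polyhedral pair $(K,B)$ the Hoffman-type bound in fact holds globally.
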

\begin{proof}
Since $X_t^{k} \in K$, we have
\begin{align*}
\operatorname{dist}(W_t,K)
\le
\|W_t - X_t^{k}\|_F
=
\|\lambda X_t^{k} + (1 - \lambda) Z_t^{k} - X_t^{k}\|_F
=
(1 - \lambda) \|r_t^{k}\|_F.
\end{align*}
Likewise, because $Z_t^{k} \in B$,
$$
\operatorname{dist}(W_t,B)
\le
\|W_t - Z_t^{k}\|_F
=
\|\lambda X_t^{k} + (1 - \lambda) Z_t^{k} - Z_t^{k}\|_F
=
\lambda \|r_t^{k}\|_F.
$$
By bounded linear regularity of $(K,B)$,
$$
v_t
=
\operatorname{dist}(W_t,K\cap B)
\le
\kappa\bigl(\operatorname{dist}(W_t,K) + \operatorname{dist}(W_t,B)\bigr)
\le
\kappa \|r_t^{k}\|_F.
$$
Substituting this bound for $v_t$ into Theorem~\ref{thm:dyn-reg}
yields the result.
\end{proof}

\section{Summary of ASTRA parameters}
\label{sec:astra-parameters}

In this section, we present a summary of the parameters that the proposed framework has, and their meaning, for both offline and online settings.

\input{Tables/parameters_ASTRA}

\section{Ablation Studies \label{app:ablation_studies}}

In this section, we test the impact of multiple parameters on the results of the proposed method, by applying it to the first state of the Starlink constellation.

\subsection{Utility Weights - \(\lambda\)}

We first test the impact of parameter \(\lambda\) (see Sec.~\ref{sec:results}) on the topologies proposed by our ADMM method, using synthetic and real-world data. For this ablation, we fix the following parameters: \(\rho = 1\), \(\tau = -0.5\), \(\epsilon = 1\times10^{-7}\), and \(\text{proj}_{its} = 20\).

\input{Tables/iridium-lambda}

Tab.~\ref{tab:iridium-lambda-app} shows the results obtained for a synthetic Iridium-like constellation. Overall, values of \(\lambda\) between 0.45 and 0.75 seem to be more desirable, as they obtain higher connectivity and lower average shortest paths. In terms of wall-clock time, we see that increasing the value of \(\lambda\) leads to slower execution. Overall, for both approximations, a value of \(\lambda\) of 0.6 gives us the best performance, as it achieves the highest connectivity and lowest average shortest path.

%\input{Tables/starlink-lambda}

%For Starlink data, our algorithm encounters greater difficulty in identifying topologies that form a single connected component compared to a synthetic Iridium-like constellation, Tab.~\ref{tab:starlink-lambda-app}. This behavior is expected, as Starlink constitutes a significantly larger and more complex constellation, characterized by non-uniform orbital plane distributions and irregular inter-satellite spacing. Despite the discrepancy in size and complexity, som of the conclusions derived from the simpler setting remain applicable for this constellation. As we increase \(\lambda\), we decrease the connectivity of the network, as the k-closest orbital plane utility considers fewer satellites, and increase execution time. Inasmuch as satellites have a greater number of neighbors, the impact of the approximation of the FOV is dampened. Overall, the best value of \(\lambda\) is 0.75, as it leads to networks with one connected component, and lowest average shortest path.

\subsection{Penalty Parameter - \(\rho\)}

Given the best value for \(\lambda\), i.e., 0.6, we now evaluate the impact of the penalty parameter \(\rho\) (recall Section~\ref{sec:admm_introduction}) on the topologies proposed by our ADMM method. We set \(\lambda = 0.6\), \(\tau = -0.5\), \(\epsilon = 1\times10^{-7}\), and \(\text{proj}_{its} = 20\).

\input{Tables/iridium-rho}

In the alternating direction method of multipliers, \(\rho\) is a penalty parameter on the norm of the constraints. Hence, higher values of \(\rho\) focus more on minimizing the norm of the constraints, rather than the objective function, while smaller values focus on the objective function rather than the constraints. Thus, small values of \(\rho\) can lead to suboptimal topologies, as we can see in Tab.~\ref{tab:rho_iridium}. Time-wise, values between 0.25, and 1 have faster execution. Overall, the value of this parameter seems to not affect much the properties of the topologies generated, as the metrics are equal for the majority of the values.

\subsection{Projected Gradient Iterations - \(\text{proj}_{its}\)}

As aforementioned in Section~\ref{sec:admm-based-efficient-implementation}, our ADMM method has a nested Projected Gradient loop to perform the updates over the matrix \(X\). Thus, in this section we evaluate the impact of the number of iterations of this loop in the topologies proposed by our method. We set \(\rho = 1\), \(\lambda = 0.6\), \(\tau = -0.5\) and  \(\epsilon = 1\times10^{-7}\). In this ablation, only the ``H'' approximation of the FOV of a satellite is considered.

\input{Tables/proj_its}

For this constellation and precision, we clearly see that the number of iterations of the Projected Gradient does not affect the characteristics of the proposed topologies. Since the precision is relatively small, this constellation is small, and follows full-deployment and uniform-spacing assumptions, this parameter as no impact on the metrics.

\input{Files/new_experiments_online_strl}

\clearpage

\end{document}

%% file: Files/net-top-config-off.tex
\paragraph*{Update over the $X$ variable.} The update over $X$ is equivalent to solving the following problem:
\begin{equation}
    \label{eq:offline-X-update}
    \begin{aligned}
        \underset{X}{\text{minimize}} &\quad h(X) \coloneqq \frac{1}{2} \|(X-P) \odot U\|_F^2 - \mathbf{Tr}(\Lambda \odot X) + \frac{\rho}{2} \left\| X - Z^k + V^k \right\|_F^2 \\
        \text{subject to} &\quad X=X^T, \quad X \mathbf{1} = \mathbf{0} .
    \end{aligned}
\end{equation}
Note that the projection onto set $K = \{X : X = X^T , X \mathbf{1} = \mathbf{0} \}$ has closed-form, as demonstrated in the following lemma.
\begin{lemma}
    \label{lemma:projection-equalities}
    Let $K = \{X : X = X^T , X \mathbf{1} = \mathbf{0} \}$, $S(A) = \frac{1}{2} (A + A^T)$, and $\Pi_{K}(\cdot)$ denote the projector operator onto $K$. Then, for any square matrix $Y$, we have $\Pi_{K}(Y) = Q S(Y) Q$ such that $Q = I - \frac{1}{n} \mathbf{1} \mathbf{1}^T$.
\end{lemma}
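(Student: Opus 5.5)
The plan is to use the fact that $K$ is a linear subspace of $\mathbb{R}^{n\times n}$: it is the intersection of the kernels of the linear maps $X\mapsto X-X^T$ and $X\mapsto X\mathbf{1}$. For a subspace, $\Pi_K(Y)$ is the unique $X^\star\in K$ with $Y-X^\star\perp K$ in the Frobenius inner product. I would therefore verify two things for the candidate $X^\star = QS(Y)Q$: that it lies in $K$, and that the residual $Y-X^\star$ is orthogonal to every $Z\in K$. Uniqueness of the projection then finishes the argument. This avoids the Lagrangian computation and also works for $n\ge 1$ with no degeneracy concerns.

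\emph{Membership.} First, $Q$ is symmetric and satisfies $Q\mathbf{1} = \mathbf{1}-\frac{1}{n}\mathbf{1}(\mathbf{1}^T\mathbf{1}) = \mathbf{0}$. Since $S(Y)$ is symmetric, $(QS(Y)Q)^T = Q^TS(Y)^TQ^T = QS(Y)Q$. Moreover, $QS(Y)Q\mathbf{1} = QS(Y)\mathbf{0} = \mathbf{0}$, so $X^\star\in K$.

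\emph{Orthogonality.} Write the residual as $Y - X^\star = (Y - S(Y)) + (S(Y) - QS(Y)Q)$, where $Y-S(Y)=C(Y)$ is skew-symmetric. For $Z\in K$, $\langle C(Y),Z\rangle_F = 0$ because skew-symmetric and symmetric matrices are Frobenius-orthogonal: $\mathrm{Tr}(C^TZ) = -\mathrm{Tr}(CZ)$, and also $\mathrm{Tr}(C^TZ)=\mathrm{Tr}(Z^TC)=\mathrm{Tr}(ZC)=\mathrm{Tr}(CZ)$, so the trace vanishes. For the second term, $Z\in K$ gives $Z\mathbf{1}=\mathbf{0}$ and $\mathbf{1}^TZ=\mathbf{0}^T$ by symmetry, hence $QZQ = Z$. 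By cyclicity of the trace and $Q=Q^T$,
\[
\langle QS(Y)Q, Z\rangle_F = \mathrm{Tr}(QS(Y)QZ) = \mathrm{Tr}(S(Y)QZQ) = \mathrm{Tr}(S(Y)Z) = \langle S(Y), Z\rangle_F,
\]
so $\langle S(Y)-QS(Y)Q, Z\rangle_F=0$. Combining the two pieces gives $Y-X^\star\perp K$, and hence $\Pi_K(Y)=X^\star$.

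The only step needing care is establishing $QZQ=Z$ for $Z\in K$, which relies on symmetry to turn the row-sum condition into a column-sum condition; the remaining steps are routine. An alternative that matches the appendix's style is to solve the KKT system from the Lagrangian directly: symmetrize the stationarity condition, solve for the multiplier $\mu$ from $X\mathbf{1}=\mathbf{0}$, and substitute back. I would keep that route as a cross-check.
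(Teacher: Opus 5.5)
Your argument is correct and complete. It is a verification, whereas the paper gives a derivation.

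The paper writes the Lagrangian $\frac{1}{2}\|X-Y\|_F^2+\langle V, X-X^T\rangle+\mu^T X\mathbf{1}$ and reads off stationarity, $X = Y + (V^T - V) - \mu\mathbf{1}^T$. It takes symmetric parts to eliminate $V$, giving $X = S(Y) - \frac{1}{2}(\mu\mathbf{1}^T + \mathbf{1}\mu^T)$. It then imposes $X\mathbf{1}=\mathbf{0}$ to solve for $\mu$ (via the scalar $r=\mathbf{1}^T\mu=\frac{1}{n}\mathbf{1}^T S(Y)\mathbf{1}$) and substitutes back to arrive at $QS(Y)Q$.

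You instead use that $K$ is a linear subspace and check the two defining conditions, $QS(Y)Q\in K$ and $Y-QS(Y)Q\perp K$. You split the residual into the skew part $C(Y)$, which is orthogonal to all symmetric matrices, and $S(Y)-QS(Y)Q$, which is orthogonal to $K$ because $QZQ=Z$ for $Z\in K$.

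What each route buys:
\begin{itemize}
\item The paper's route is constructive. It shows where the centering matrix $Q$ comes from and would work without knowing the answer in advance. It does, however, rely implicitly on the standard fact that, for this strongly convex problem with linear equality constraints, the KKT conditions characterize the unique minimizer.
\item Your route needs the formula up front, which the statement supplies. In exchange it is shorter, avoids the multiplier algebra, and leaves nothing implicit beyond the projection theorem for subspaces.
\end{itemize}

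Your route also exposes the structure behind the formula. The map $Y\mapsto QS(Y)Q$ is the composition of two commuting, Frobenius-self-adjoint idempotent maps: symmetrization $S$ and two-sided centering $M\mapsto QMQ$. Such a composition is the orthogonal projection onto the intersection of their ranges, which is exactly $K$.
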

The proof for this lemma is provided in Appendix~\ref{sec:lemma_proj}. With this result, we can easily solve Problem~\eqref{eq:offline-X-update} through an iterative solution such as projected gradient descent. Alg.~\ref{alg:projected-gd} summarizes the iterative solution that receives $X^k$ and outputs the next estimate $X^{k+1}$.

\begin{algorithm}
\caption{Projected Gradient Descent}
\label{alg:projected-gd}
\begin{algorithmic}[1]
    \REQUIRE $K = \{X : X = X^T , X \mathbf{1} = \mathbf{0} \}$, $X_0 = X^k$, number of iterations $\tau$, step-size $\eta$.
    
    \FOR{$t = 1, \dots, \tau$} 
    
        \STATE $\nabla h(X_{t-1}) \coloneqq U \odot U \odot (X_{t-1} - P) - \Lambda + \rho (X_{t-1} - Z^k + V^k)$
        
        \STATE $X_{t} = \Pi_{{\mathcal K}} \left( X_{t-1} - \eta_t \nabla h(X_{t-1})\right)$
    \ENDFOR
    \STATE Output $X_{\tau}$
\end{algorithmic}
\end{algorithm}
\paragraph*{Update over the $Z$ variable.} Since $g$ is the indicator function of set \(B \coloneqq \{X : B^{\text{lower}} \leq X \leq B^{\text{upper}} \}\), then the update simply becomes the projection of $J \coloneqq X^{k+1} + V^k$  onto set $B$. Thus, $\forall i,j$,
\begin{equation}
    \label{eq:matrix-box-projection}
    \begin{aligned}
    Z^{k+1}_{ij} = \min\left( \max\left(J_{ij}, B^{\text{lower}}_{ij} \right), B^{\text{upper}}_{ij} \right) .
\end{aligned}
\end{equation}
These results allow us to construct an efficient ADMM to solve Problem~\eqref{eq:net-top-optimization-problem} (Alg.~\ref{alg:astra-admm-offline}). Nevertheless, we see that ASTRA consists of two nested iterative algorithms (ADMM and the Projected Gradient Descent for the update of the $X$ variable). Next, we show that the online algorithms can leverage the ADMM to achieve lower computational complexity than the offline version of ASTRA.
\begin{algorithm}[H]
\caption{ASTRA}
\label{alg:astra-admm-offline}
\begin{algorithmic}[1]
    \REQUIRE initialize $X^1, Z^1, V^1, \rho$, and $k = 1$.
    
    \WHILE{some stopping criteria are not met} 
    
        \STATE $X^{k+1} =$ Algorithm~\ref{alg:projected-gd} with initial point $X^k$ \\
        \STATE $Z^{k+1} = \Pi_B(X^{k+1} + V^k)$ \qquad\qquad (Eq.~\eqref{eq:matrix-box-projection}) \\
        \STATE $V^{k+1} = V^k + \left( X^{k+1} - Z^{k+1} \right)$ \\
        \STATE $k = k+1$
    \ENDWHILE
\end{algorithmic}
\end{algorithm}

%% file: Files/net-top-config-on.tex
\subsection{Online solutions}
\label{sec:online-derivation}

As demonstrated by Norberto et al.~\cite{norberto2026online}, Problem~\eqref{eq:net-top-optimization-problem} easily allows for the application of online algorithms for the problem of dynamic network topology configuration. In the framework of OCO, we can assume that, at each round of computation $t$, the revealed cost function $f_t$ has the form $f_t(X) = \frac{1}{2} \|(X - P_t) \odot U_t\|_F^2 - \mathbf{Tr}(\Lambda_t \odot X)$, where $X$ is constrained as in Problem~\eqref{eq:net-top-optimization-problem}, and $P_t, U_t, \Lambda_t$ denote the (possibly) time-varying matrices. To deal with the constraints affecting the variable $X$, a similar analysis to the one in the previous section shows that we can leverage ADMM to perform the projection step in OGD (Algorithm~\ref{alg:admm-based-projection-ogd}) and solve the linear optimization problem in OCG (Algorithm~\ref{alg:admm-based-linear-ocg}). For simplicity, in this section, we only present the structure of these subroutines. 
Based on the general formulation of OGD and OCG presented in~\cite{hazan2016introduction}, in Appendix~\ref{sec:ASTRA-incorporating} we show how we incorporate these subroutines. Their derivations are accessible in Appendices~\ref{sec:admm-based-projection-for-ogd} and~\ref{sec:admm-based-solution-ocg-linear-opt}, respectively. In Algorithm~\ref{alg:admm-based-projection-ogd}, $\overline{X}_t$ denotes the gradient step performed by OGD with respect to the newly revealed cost function $f_t$. In Algorithm~\ref{alg:admm-based-linear-ocg}, $\nabla F_t(X_t)$ denotes the gradient of the function $F_t$ evaluated at the current estimate, where $F_t(X) \coloneqq \eta \sum_{\tau=1}^{t} \langle \nabla f_{\tau}(X_{\tau}) , X \rangle + \frac{1}{2} \|X - X_1\|_F^2$.
\begin{figure}[t]
\centering

\begin{minipage}[t]{0.48\textwidth}
\captionsetup{type=algorithm}
\caption{ASTRA projection for OGD}
\label{alg:admm-based-projection-ogd}
\begin{algorithmic}[1]
    \REQUIRE initialize $X^1, Z^1, V^1, \rho$, receive $\overline{X}_t$ as input and $k = 1$.
    
    \WHILE{some stopping criteria are not met} 
        \STATE $W^k = \frac{\overline{X}_t + \rho (Z^k - V^k)}{1 + \rho}$ \\
        \STATE $X^{k+1} = \Pi_{K}(W^k)$ \qquad\quad (Lemma~\ref{lemma:projection-equalities}) \\
        \STATE $Z^{k+1} = \Pi_B(X^{k+1} + V^k)$  (Eq.~\eqref{eq:matrix-box-projection}) \\
        \STATE $V^{k+1} = V^k + \left( X^{k+1} - Z^{k+1} \right)$ \\
        \STATE $k = k+1$
    \ENDWHILE
    \RETURN $X^k$
\end{algorithmic}
\end{minipage}
\hfill
\begin{minipage}[t]{0.48\textwidth}
\captionsetup{type=algorithm}
\caption{ASTRA linear optimization for OCG}
\label{alg:admm-based-linear-ocg}%
\begin{algorithmic}[1]
    \REQUIRE initialize $X^1, Z^1, V^1, \rho$, receive $\nabla F_t(X_t)$ as input and $k = 1$.
    
    \WHILE{some stopping criteria are not met} 
        \STATE $W^k = (Z^k - V^k) - \frac{\nabla F_t(X_t)}{\rho}$ \\
        \STATE $X^{k+1} = \Pi_{K}(W^k)$ \qquad\quad (Lemma~\ref{lemma:projection-equalities}) \\
        \STATE $Z^{k+1} = \Pi_B(X^{k+1} + V^k)$  (Eq.~\eqref{eq:matrix-box-projection}) \\
        \STATE $V^{k+1} = V^k + \left( X^{k+1} - Z^{k+1} \right)$ \\
        \STATE $k = k+1$
    \ENDWHILE
    \RETURN $X^k$
\end{algorithmic}
\end{minipage}

\end{figure}
We observe that the computational complexity of OGD and OCG decreases significantly relative to the offline solution (Algorithm~\ref{alg:astra-admm-offline}), as in the ADMM sub-routine, we can obtain the $X$ update in closed form, instead of an iterative solution.

Different works have studied convergence rates and regret bounds considering inexact updates~\cite{schmidt2011inexact,liang2016convergence,dixit2019inexact}. Note that the use of the ADMM sub-routine encompasses the use of inexact updates on both algorithms. In the next theorem, we show, for OGD in particular, the dynamic regret under inexact ADMM inner loops, and relate the regret degradation to infeasibility and ADMM primal residuals.
\begin{theorem}[Dynamic regret bounds for OGD with infeasible inner-loop iterates]
\label{thm:dynamic-regret-ogd-bounds-infeasible}
    Let $\mathcal C := K \cap B \subset \mathbb R^{n\times n}$ denote the feasible set of Problem~\eqref{eq:net-top-optimization-problem}, and $D_{\mathcal C} \coloneqq \sup_{X,Z \in \mathcal C} \|X-Z\|_F$ its diameter with respect to the Frobenius norm. Suppose that, at round $t$, the inner ADMM loop returns $X_t^{k} \in K$, after $k$ inner iterations. Define the played iterate as $W_t := X_t^{k}$, and let $r_t^{k} := X_t^{k} - Z_t^{k}$ denote the ADMM primal residual. Let $\bar W_t := \Pi_{\mathcal C}(W_t)$ denote the feasible shadow of $W_t$. Assuming the shadow errors satisfy
    $$
    \|\bar W_{t+1} - \Pi_{\mathcal C}\!\bigl(\bar W_t - \eta \nabla f_t(\bar W_t)\bigr)\|_F
    \le
    e_t.
    $$
    Furthermore, assume that the pair $(K,B)$ is boundedly linearly regular, for some $\kappa > 0$, on a bounded set containing the iterates. Then, for any comparator sequence $U_1, \dots, U_T \in \mathcal C$, the dynamic regret of OGD with step-size $\eta$ is bounded by
    \begin{align*}
        \sum_{t=1}^T \bigl(f_t(W_t)-f_t(U_t)\bigr)
    \le
    L\kappa\sum_{t=1}^T \|r_t^{(k)}\|_F
    +
    \frac{D_{\mathcal C}^2}{2\eta}
    +
    \frac{D_{\mathcal C}}{\eta} P(U_1, \dots, U_T)
    +
    \frac{\eta L^2T}{2}
    +
    \frac{D_{\mathcal C}}{\eta}\sum_{t=1}^T e_t
    +
    \frac{1}{2\eta}\sum_{t=1}^T e_t^2.
    \end{align*}
\end{theorem}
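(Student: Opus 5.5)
This statement is the special case $\lambda = 1$ of Corollary~\ref{cor:addm-iterate-corollary}, which in turn rests on Theorem~\ref{thm:dyn-reg} in Appendix~\ref{sec:dynamic-regret-feasible-shadows}. I will therefore not redo the telescoping argument. Instead I will check that the hypotheses line up and then invoke the corollary. Throughout, Assumptions~\ref{assumption:diameter} and~\ref{assumption:bounded-gradient} are taken as standing hypotheses, since $L$ and $D_{\mathcal C}$ appear in the bound.

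\textbf{Step 1: identify the played iterate.} Set $W_t := X_t^k$, which is the convex combination $\lambda X_t^k + (1-\lambda) Z_t^k$ with $\lambda = 1$. The corollary also needs $Z_t^k \in B$. This holds automatically because the $Z$-update of Algorithm~\ref{alg:admm-based-projection-ogd} is the box projection \eqref{eq:matrix-box-projection}, so every $Z$ iterate lies in $B$. Likewise, $X_t^k \in K$ holds because of the closed-form projection of Lemma~\ref{lemma:projection-equalities}.

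\textbf{Step 2: bound the infeasibility.} Since $W_t \in K$, we have $\operatorname{dist}(W_t,K)=0$. Since $Z_t^k \in B$, we have $\operatorname{dist}(W_t,B)\le \|X_t^k - Z_t^k\|_F = \|r_t^k\|_F$. Bounded linear regularity then gives
$$v_t = \operatorname{dist}(W_t,\mathcal C) \le \kappa\|r_t^k\|_F.$$

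\textbf{Step 3: conclude.} With the shadow-error hypothesis $\|\bar W_{t+1} - Q_t\|_F \le e_t$ in hand, Theorem~\ref{thm:dyn-reg} applies to the sequence $(W_t)$. It yields the stated bound with $L\sum_t v_t$ replaced by $L\kappa\sum_t\|r_t^k\|_F$. For reference, the chain inside that theorem is as follows. First split $f_t(W_t) - f_t(U_t)$ via the Lipschitz bound $f_t(W_t)-f_t(\bar W_t)\le L v_t$. Then apply Lemma~\ref{lemma:one-step-inexact-projection} to the feasible shadows. Finally, handle the moving comparator by telescoping, using $\|\bar W_{t+1}-U_t\|_F^2 \ge \|\bar W_{t+1}-U_{t+1}\|_F^2 - 2D_{\mathcal C}\|U_{t+1}-U_t\|_F$, and bound $\|\bar W_1 - U_1\|_F^2$ by $D_{\mathcal C}^2$.

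\textbf{Main obstacle.} No single step is deep, but two points need care. The first is justifying bounded linear regularity of $(K,B)$; I would cite the Hoffman-bound argument in the footnote to Corollary~\ref{cor:addm-iterate-corollary}, since both sets are polyhedral. The second is the global Lipschitz assumption on $f_t$. The quadratic $f_t$ is not globally Lipschitz, so I would restrict it to the bounded set containing the iterates and their shadows, and take $L$ as the gradient bound there.
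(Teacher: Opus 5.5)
Your proposal is correct and matches the paper, which obtains this theorem directly from Corollary~\ref{cor:addm-iterate-corollary} with $W_t = X_t^k$. This is the case $\lambda = 1$, where $\operatorname{dist}(W_t,K)=0$ and $\operatorname{dist}(W_t,B)\le\|r_t^k\|_F$ give $v_t \le \kappa\|r_t^k\|_F$, which is then substituted into Theorem~\ref{thm:dyn-reg}; your remark that the global gradient bound of Assumption~\ref{assumption:bounded-gradient} cannot hold for these quadratics, and should instead be imposed on a bounded convex set containing $\mathcal C$, the iterates and the segments $[W_t,\bar W_t]$, is a valid repair that the paper glosses over.
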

Here, $P(U_1, \dots, U_T) \coloneqq \sum_{t=1}^{T-1} \|U_t - U_{t+1}\| + 1$ denotes the \emph{path length} of the dynamic comparator sequence $U_1, \dots, U_T$. This result is a direct consequence of Corollary~\ref{cor:addm-iterate-corollary} in Appendix~\ref{sec:dynamic-regret-feasible-shadows}, where we quantify a bound on dynamic regret for inexact update steps that arise from the precision of ADMM. Furthermore, if we have information about the path length of the dynamic comparator, then, as in~\cite[Theorem 10.1]{hazan2016introduction}, we can define $\eta \coloneqq \sqrt{\frac{P(U_1, \dots, U_T)}{T}}$, to obtain a dynamic regret bound that is sublinear on both the horizon $T$ and the path length $P(U_1, \dots, U_T)$. In the next section, we show that the path length can be estimated offline, enabling deployment of a fine-tuned optimal OGD for dynamic network topology configuration.

%% file: Tables/iridium_ours_sota.tex
\begin{table}[ht!]
\centering
\scriptsize
\caption{Methods applied on Iridium-like constellation, with 6 orbital planes and 11 uniformly-spaced satellites per plane. ASTRA is capable of achieving the same results as GUTO in a fraction of the time, while closely matching with state-of-the-art performance.}
\begin{tabular}{cccccc}
\hline
Metrics & E \(\uparrow\)           & A-Deg  \(\uparrow\)      & CC \(\downarrow\) & A-SP   \(\downarrow\)        & Time (s) \(\downarrow\)\\ \hline
+Grid   & 130    & 3.94   & 1  & {\ul 4.036}    & {\textbf{0.001}}    \\
\(\times\)Grid   & 124          & 3.76        & 1  & 4.978          & \textbf{0.001}    \\
DoTD    & {\ul 131} & {\ul 3.97} & 1  & 4.412          & 0.023    \\
Motif   & \textbf{132}      & \textbf{4.00}          & 1  & 4.292          & 1.326    \\
GUTO   & 128          & 3.88        & 1  & \textbf{3.767} &  0.091     \\ 
ASTRA   &128           &3.88         &1   & \textbf{3.767} & 0.071      \\ \hline
\end{tabular}
\label{tab:iridium-like_constellation}
\end{table}

%% file: Figures_tex/regret_comparison.tex
\begin{figure}[ht!]
    \centering
    %\scriptsize
    \includegraphics[width=1\columnwidth]{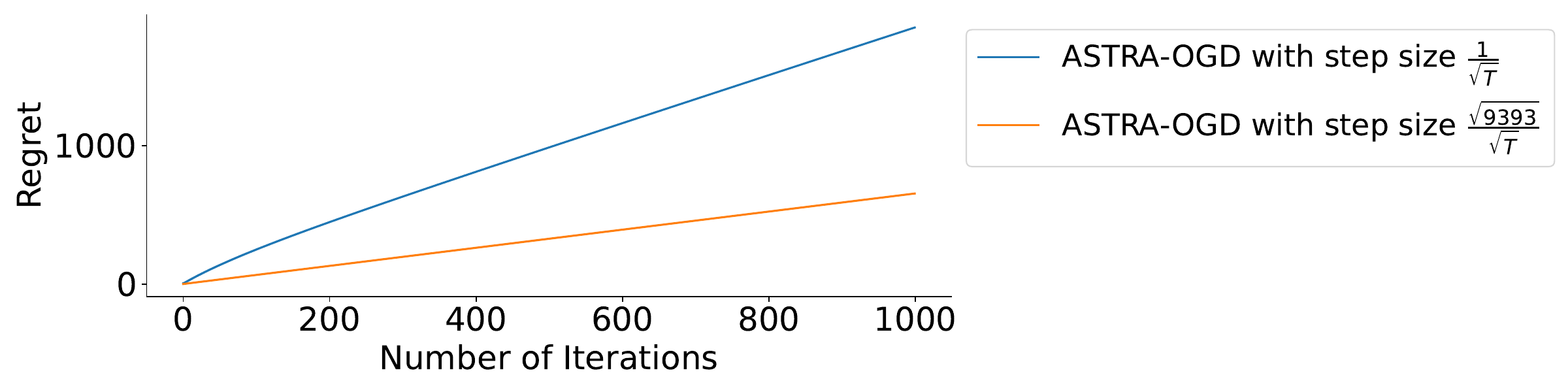}
	\caption{Evolution of the regret for ASTRA-OGD with the static and dynamic regret step sizes, on an Iridium-like constellation with 6 orbital planes and 11 satellites per plane, all uniformly spaced.}
	\label{fig:regret_comparison}
\end{figure}

%% file: Figures_tex/online_adapt_rem_add.tex
\begin{figure}[ht!]
    \centering
    \scriptsize
    \includegraphics[width=1\columnwidth]{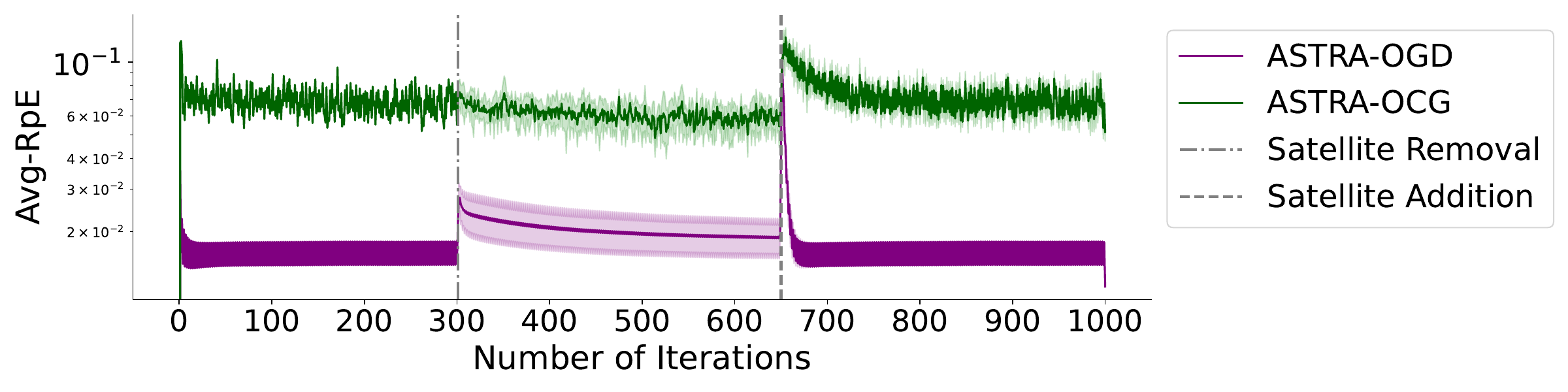}
	\caption{Smoothed average residual per entry (Avg-RpE) between the matrices proposed by ASTRA-OGD and ASTRA-OCG relative to GUTO. Six random satellites are removed at iteration 300 and restored to their original positions at iteration 650. ASTRA-OGD is capable of proposing similar solutions to GUTO, when satellites are added or removed, while ASTRA-OCG struggles to adapt.}
	\label{fig:adapt_online_rem_add}
\end{figure}

%% file: Tables/offline_starlink.tex
\begin{table}[ht!]
\centering
\scriptsize
\caption{Comparison on Starlink data. Only GUTO and ASTRA attain runs with a single connected component; therefore, A-SP* is reported only for topologies satisfying this condition. ASTRA preserves the connectivity regime of GUTO while improving efficiency.}

\begin{tabular}{@{}cccccc@{}}
\cmidrule(l){3-6}
                  &                      & \multicolumn{4}{c}{Metrics} \\ \midrule
Method            & Approximation        & E\(\uparrow\)   & A-Deg\(\uparrow\)   & CC\(\downarrow\)  & A-SP* \(\downarrow\)  \\
+Grid             & \multirow{3}{*}{---}                  &\(2869.14_{\pm14.123}\)&\(3.608_{\pm0.017}\)       &\(9.038_{\pm2.019}\)     &---       \\
\(\times\)Grid    &                   &\(2468.77_{\pm19.325}\)&\( 3.205_{\pm0.026}\)         &\(3.346_{\pm1.314}\)     &---       \\
DoTD              & &\(\mathbf{3183.16_{\pm0.586}}\)     &\(\mathbf{3.998_{\pm0.001}}\)         &\(4.211_{\pm1.436}\)     &---       \\ \midrule
GUTO        & \multirow{2}{*}{H}   & {\ul \(2900.95_{\pm13.033}\)}    &{\ul \(3.650_{\pm0.015}\)}         & \(\mathbf{1.233_{\pm0.491}}\)     &\(15.742_{\pm0.388}\)       \\
ASTRA &                      & \(2900.24_{\pm13.058}\)&\(3.649_{\pm0.015}\)& {\ul \(1.264_{\pm0.533}\)}     &\(15.760_{\pm0.381}\)    \\ \midrule
%ASTRA-OGD* &    &    &    &    &   \\\midrulec
GUTO        & \multirow{2}{*}{H+C} &\(2797.16_{\pm23.664}\)     &\(3.544_{\pm0.025}\)    & \(1.642_{\pm0.878}\)    &\(\mathbf{11.771_{\pm0.226}}\)       \\
ASTRA &                      &\(2795.63_{\pm23.506}\)&\(3.542_{\pm0.025}\)&\(1.646_{\pm0.878}\)& {\ul \(11.817_{\pm0.224}\)}     \\ \bottomrule
\end{tabular}
\label{tab:offline_starlink}
\end{table}

%% file: Tables/times_offline.tex
\begin{table}[ht!]
\centering
\scriptsize
\caption{Average time and respective std each method takes to propose a topology (Starlink), after 10 runs.+Grid and \(\times\)Grid are the fastest methods.}
\begin{tabular}{@{}cccccc@{}}
\toprule
Algorithm: & +Grid & \(\times\)Grid & DoTD & GUTO & ASTRA \\ \midrule
Time (s)   &\(0.021_{\pm0.001}\)       &\(0.026_{\pm0.001}\)   &\(3262.927_{\pm48.097}\)      &       \(665.027_{\pm2.093}\)&  \(567.949_{\pm5.615}\)    \\ \bottomrule
\end{tabular}
\label{tab:times_offline}
\end{table}

%Time each method takes to propose a topology (Starlink). This table shows the average wall-clock time and respective std of running each method 10 times. +Grid and \(\times\)Grid are the fastest methods.

%% file: Tables/parameters_ASTRA.tex
\begin{table}[ht!]
\centering
\caption{List of the parameters for the ASTRA offline algorithm.}
\begin{tabular}{@{}cc@{}}
\toprule
Parameter:            & Description:                                                                                                                                \\ \midrule
\(\lambda\)           & Weight given to the utilities being considered.                                                                                             \\
\(\rho\)              & Penalty parameter from the Augmented Lagrangian.                                                                                            \\
\(\epsilon\)          & Stopping criteria value.                                                                                                                    \\
\(\text{proj}_{its}\) & \begin{tabular}[c]{@{}c@{}}Number of iterations for the \\ Projected Gradient (Alg.~\ref{alg:projected-gd}) that deals with \\ the update over the matrix \(X\).\end{tabular} \\
T                     & \begin{tabular}[c]{@{}c@{}}Maximum number of iterations of the ASTRA\\ algorithm.\end{tabular}                                              \\
\(\Lambda\)           & \begin{tabular}[c]{@{}c@{}}Weight to be given to the connections between\\ satellites.\end{tabular}                                         \\
FOV Approximation     & \begin{tabular}[c]{@{}c@{}}Which approximation of the FOV of a satellite\\ to consider.\end{tabular}                                        \\\bottomrule
\end{tabular}
\label{tab:parameter_table}
\end{table}

%% file: Tables/iridium-lambda.tex
\begin{table}[ht!]
\centering
\caption{Effect of the parameter \(\lambda\) and the approximation of the FOV of a satellite on the results of the ADMM-based algorithm, for an Iridium-like constellation.}
\begin{tabular}{@{}ccccccc@{}}
\cmidrule(l){3-7}
            &               & \multicolumn{5}{c}{Metrics}                         \\ \midrule
\(\lambda\) & Approximation & E\(\uparrow\) & A-Deg\(\uparrow\) & CC\(\downarrow\) & A-SP\(\downarrow\) & Time (s) \\ \midrule

0    & \multirow{7}{*}{H}   & 121 & 3.667 & 1 & 4.678 & 0.047  \\
0.15 &                        & 122 & 3.697 & 1 & 4.648 & 0.055  \\
0.30 &                        & 123 & 3.727 & 1 & 4.650 & 0.071  \\
0.45 &                        & 122 & 3.697 & 1 & 4.726 & 0.104  \\
0.60 &                        & 124 & 3.758 & 1 & 4.482 & 0.133  \\
0.75 &                        & 123  & 3.727 & 1 & 4.466 & 0.204  \\
0.90 &                        & 123  & 3.727 & 1 & 4.437   & 0.201  \\ \midrule
%1    &                        & 66  & 2.276 & 2 & ---   & 0.169  \\ \midrule

0    & \multirow{7}{*}{H+C} & 113 & 3.424 & 1 & 4.883 & 0.042 \\
0.15 &                        & 113 & 3.424 & 1 & 5.034 & 0.044 \\
0.30 &                        & 110 & 3.333 & 1 & 5.874 & 0.048 \\
0.45 &                        & 119 & 3.662 & 1 & 4.462 & 0.060 \\
0.60 &                        & 128 & 3.879 & 1 & 3.763 & 0.079 \\
0.75 &                        & 123 & 3.727 & 1 & 3.789 & 0.208 \\
0.90 &                        & 122 & 3.697 & 1 & 3.945 & 0.210 \\ \bottomrule
%1    &                        & 111 & 3.524 & 1 & 4.495 & 0.137 \\ \bottomrule

\end{tabular}
\label{tab:iridium-lambda-app}
\end{table}

%% file: Tables/iridium-rho.tex
\begin{table}[ht!]
\centering
\caption{Effect of the parameter \(\rho\) and the approximation of the FOV of a satellite on the results of the ADMM-based algorithm, for an Iridium-like constellation.}
\begin{tabular}{@{}ccccccc@{}}
\cmidrule(l){3-7}
            &               & \multicolumn{5}{c}{Metrics}                         \\ \midrule
\(\rho\)    & Approximation & E\(\uparrow\) & A-Deg\(\uparrow\) & CC\(\downarrow\) & A-SP\(\downarrow\) & Time (s) \\ \midrule
0.001 &  \multirow{9}{*}{H} &128   &3.879   &1   &4.082   &0.204     \\
0.01               &      &124   &3.758   &1   &4.482   &0.212      \\
0.1                &      &124   &3.758   &1   &4.482   &0.178    \\
0.25                &     &124   &3.758   &1   &4.482   &0.078     \\
0.5                &      &124   &3.758   &1   &4.482   &0.086      \\
0.75                &     &124   &3.758   &1   &4.482   &0.110      \\
1                  &      &124   &3.758   &1   &4.482   &0.146     \\
2                  &      &124   &3.758   &1   &4.482   &0.214      \\
4                  &      &124   &3.758   &1   &4.482   &0.209      \\ \midrule

0.001 &  \multirow{9}{*}{H+C} &131   &3.970   &2  &---   &0.209      \\
0.01               &      &126   &3.818   &1   &3.802   &0.209      \\
0.1                &      &128   &3.879   &1   &3.763   &0.175      \\
0.25                &      &128   &3.879   &1   &3.763   &0.077     \\
0.5                &      &128   &3.879   &1   &3.763   &0.056      \\
0.75                &      &128   &3.879   &1   &3.763   &0.072    \\
1                  &      &128   &3.879   &1   &3.763   &0.079    \\
2                  &      &128   &3.879   &1   &3.763   &0.130      \\
4                  &      &128   &3.879   &1   &3.763   &0.209      \\ \bottomrule
\end{tabular}
\label{tab:rho_iridium}
\end{table}

%% file: Tables/proj_its.tex
\begin{table}[ht!]
\centering
%\scriptsize
\caption{Effect of the parameter \(\text{proj}_{its}\) on the results of the ADMM-based algorithm, for both synthetic and real data.}
\begin{tabular}{@{}cccccc@{}}
\cmidrule(l){2-6}
                          & \multicolumn{5}{c}{Metrics}                         \\ \midrule
\(\text{proj}_{its}\)     & E\(\uparrow\) & A-Deg\(\uparrow\) & CC\(\downarrow\) & A-SP\(\downarrow\) & Time (s)\\ \midrule
5 &  124   &3.758   &1   &4.556   &0.075   \\
10                     &124   &3.758   &1   &4.556   &0.108     \\
20                      &124   &3.758   &1   &4.556   &0.207    \\
30                     &124   &3.758   &1  &4.556   &0.323     \\
40                      &124   &3.758   &1  &4.556   &0.412      \\
50                      &124   &3.758   &1  &4.556   &0.501      \\ 
60                      &124   &3.758   &1  &4.556   &0.594      \\ \bottomrule
\end{tabular}
\label{tab:projected_its}
\end{table}

%% file: Files/new_experiments_online_strl.tex
\section{Online Algorithms on Starlink data}
\label{app:on_alg_on_strl}

In this setting, we evaluate ASTRA-OGD and ASTRA-OCG (Section~\ref{sec:online-derivation}) on the first 70 Starlink topologies retrieved in Section~\ref{sec:results}. The parameters considered for ASTRA, GUTO, ASTRA-OGD and ASTRA-OCG are the same as in Section~\ref{sec:results}, used for Starlink data, apart from the approximation of the FOV, which we consider to be ``H''. The learning rate for the latter is as follows; ASTRA-OCG has a step-size \(\eta=\frac{1}{T^{\frac{3}{4}}}\), and \(\sigma=\text{min}\{1, \frac{2}{t^{\frac{1}{2}}}\}\)~\cite{hazan2016introduction}; ASTRA-OGD has a dynamic regret step size $\eta = \sqrt{\frac{P(U_1, \dots, U_T)}{T}} = \sqrt{\frac{3280.452}{T}}$, as it achieves smaller regret values than the standard step-size (Section~\ref{sec:results} - OGD with fine-tuned step-size for optimal dynamic regret).

\input{Tables/offline_vs_online_70its}

\input{Figures_tex/astra-ogd_vs_astra-ocg_70its}

Tab.~\ref{tab:online_vs_offline_strl_70its} shows the metrics for the proposed graphs, as well as the total number of connections that do not respect matrix \(P\) (Section~\ref{sec:opt-problem-formulation}), ``\(\#\{\neg P\}\)'', and its average value per topology, ``A-\(\#\{\neg P\}\)''. The connectivity of the topologies proposed by ASTRA-OGD sees a decrease in comparison with GUTO and ASTRA, while 3,25\% of the connections of these topologies do not conform with matrix \(P\). While we lose connectivity properties, compared to the offline methods, ASTRA-OGD remains a competitive adaptive solution for satellite network topology configuration, as it leads to significantly lower computational burden without the need to recompute the topology from scrath at each iteration, as will be elaborated in the remainder of the section. Although ASTRA-OCG seems to be the best performing method, as it achieves the highest connectivity metrics and the smallest ``SP*'', it achieves the highest value of \(\#\{\neg P\}\) and A-\(\#\{\neg P\}\), with an increase of 150\%, when compared with ASTRA-OGD. Thus, its metrics are achieved by ignoring the FOV and distance constraints expressed by matrix \(P\), for ISLs. Regret-wise, while ASTRA-OGD achieves a regret of \(2294.224\), ASTRA-OCG sees an increase of 640\%, obtaining a regret value of \(14659.08\) (see the figure on the right in Fig~\ref{fig:astra-ogd_vs_astra-ocg_its}).

We also see a discrepancy between the number of iterations each algorithm takes to converge (see the figure on the left in Fig~\ref{fig:astra-ogd_vs_astra-ocg_its}). ASTRA-OGD seems to maintain a similar number of iterations until convergence, around six to seven thousand, taking on average \(62.857_{\pm 6.058}\)(s), a 89\% faster execution than ASTRA. ASTRA-OCG shows a greater variance on the number of iterations, taking around five times more iterations than ASTRA-OGD to converge. It is also notable that after 20 topologies, ASTRA-OCG shows an increase in the number of iterations needed to converge, as more topologies are processed.

\input{Tables/astra-ocg_iterations_time}

Tab.~\ref{tab:astra-ocg-time-its} shows the wall-clock time that ASTRA-OCG takes to reach a certain number of iterations, for a fixed topology. Considering the best-case scenario, ASTRA-OCG takes around twenty thousand iterations to converge (Fig~\ref{fig:astra-ogd_vs_astra-ocg_its}), leading to a 42\% faster execution than ASTRA and 81\% slower execution when compared with ASTRA-OGD. As the number of iterations to converge increases, the wall-clock time also increases, achieving execution times greater than GUTO (Tab.~\ref{tab:astra-ocg-time-its}). Thus, there seems to exist a proportional relationship between the number of topologies processed by ASTRA-OCG and the time needed to propose new ones.

Altogether, we see a trade-off between efficiency and network connectivity when using ASTRA-OGD. Contrastingly, by considering and assigning the same importance to all the previous loss functions, ASTRA-OCG struggles to conform with the constraints on ISLs, established by the matrix \(P\), leading to the significant disparities in performance to the other methods.

%% file: Tables/offline_vs_online_70its.tex
\begin{table}[]
\scriptsize
\centering
\caption[Comparison of Online and Offline algorithms in Starlink data.]{Graph metric comparison between offline and online algorithms for 70 consecutive Starlink topologies. In addition to graph metrics, we also consider the total number of connections that do not respect matrix \(P\), ``\(\#\{\neg P\}\)'', and its average value per topology ``A-\(\#\{\neg P\}\)''.}
\begin{tabular}{@{}ccccccc@{}}
\cmidrule(l){2-7}
 & \multicolumn{6}{c}{Metrics} \\ \midrule
Algorithm: & E \(\uparrow\)                       & A-Deg\(\uparrow\)                 & CC \(\downarrow\)                   & SP* \(\downarrow\) & \(\#\{\neg P\}\) \(\downarrow\) & A-\(\#\{\neg P\}\)  \(\downarrow\)                  \\ \midrule
GUTO      &  \(\mathbf{2898.71_{\pm 13.993}}\) & \(\mathbf{3.647_{\pm 0.017}}\) & \(\mathbf{1.129_{\pm 0.335}}\) & \(\mathbf{15.668_{\pm 0.372}}\)    & 0 & \(0_{\pm0.0}\) \\
ASTRA     &  \(2897.87_{\pm 14.232}\) & \(3.646_{\pm 0.017}\) & \(1.171_{\pm 0.377}\) & \(15.704_{\pm 0.355}\) & 0 & \(0_{\pm0.0}\)  \\ \midrule
ASTRA-OGD & \(2746.68_{\pm 19.604}\) & \(3.456_{\pm 0.024}\) & \(1.855_{\pm 0.728}\) & \(14.303_{\pm 0.431}\) & \textbf{6153} & \(\mathbf{89.174_{\pm 9.902}}\) \\
ASTRA-OCG & \(\mathbf{3169.20_{\pm 2.511}}\)  & \(\mathbf{3.984_{\pm 0.003}}\) & \(\mathbf{1.029_{\pm 0.168}}\) & \(\mathbf{11.103_{\pm 0.473}}\) & 15199 & \(220.275_{\pm 48.962}\) \\ \bottomrule
\end{tabular}
\label{tab:online_vs_offline_strl_70its}
\end{table}

%% file: Figures_tex/astra-ogd_vs_astra-ocg_70its.tex
\begin{figure}[h!]
    \centering
    \includegraphics[width=1\linewidth]{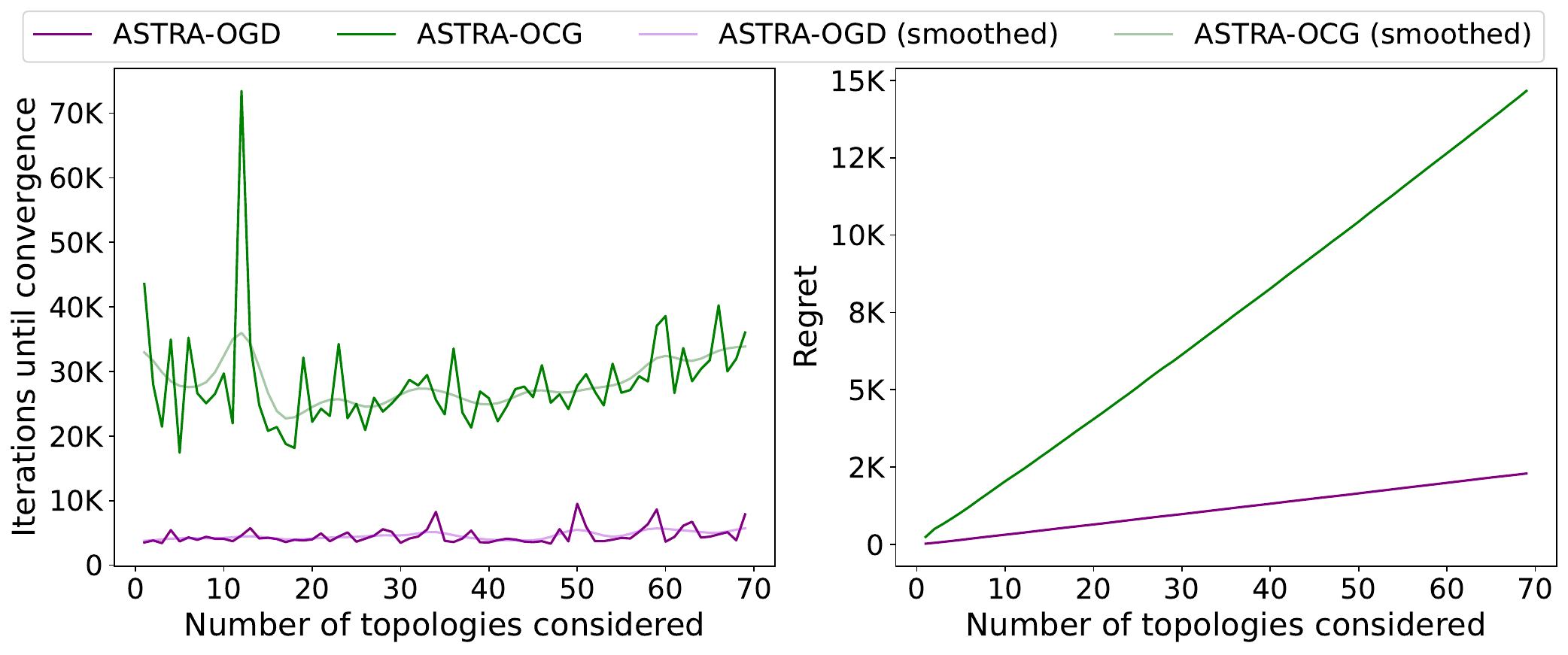}
    \caption[Number of iterations until convergence and regret of ASTRA-OGD and ASTRA-OCG on Starlink data.]{Left - Number of iterations until convergence for ASTRA-OGD and ASTRA-OCG on Starlink data. Right - Regret evolution of both algorithms on Starlink data. ASTRA-OGD obtains the lowest regret, taking significantly fewer iterations to converge.}
    \label{fig:astra-ogd_vs_astra-ocg_its}
\end{figure}

%% file: Tables/astra-ocg_iterations_time.tex
\begin{table}[]
\centering
\scriptsize
\caption[Wall-clock time needed for ASTRA-OCG to achieve a certain number of iterations]{Wall-clock time needed for ASTRA-OCG to achieve a certain number of iterations for a fixed topology. The mean values and respective std are presented for 10 runs.}
\begin{tabular}{@{}cccc@{}}
\toprule
Number of iterations: & 20K                     & 30k         & 40k        \\ \midrule
Time (s)              & \(330.870_{\pm 3.462}\) & \(430.103_{\pm 0.983}\) & \(658.597_{\pm 3.747}\)  \\ \bottomrule
\end{tabular}
\label{tab:astra-ocg-time-its}
\end{table}

%% file: mybib.bib
@book{axler2024linear,
  title={Linear algebra done right},
  author={Axler, Sheldon},
  year={2024},
  publisher={Springer},
  doi={doi.org/10.1007/978-3-031-41026-0}
}

@article{hertzfeld2004weather,
  title={Weather satellites and the economic value of forecasts: evidence from the electric power industry},
  author={Hertzfeld, Henry R and Williamson, Ray A and Sen, Avery},
  journal={Acta Astronautica},
  volume={55},
  number={3-9},
  pages={791--802},
  year={2004},
  publisher={Elsevier}
}

@article{shaengchart2023starlink,
  title={Starlink satellite project impact on the Internet provider service in emerging economies},
  author={Shaengchart, Yarnaphat and Kraiwanit, Tanpat},
  journal={Research in Globalization},
  volume={6},
  pages={100132},
  year={2023},
  publisher={Elsevier}
}

@article{de2015satellite,
  title={Satellite communications supporting internet of remote things},
  author={De Sanctis, Mauro and Cianca, Ernestina and Araniti, Giuseppe and Bisio, Igor and Prasad, Ramjee},
  journal={IEEE Internet of Things Journal},
  volume={3},
  number={1},
  pages={113--123},
  year={2015},
  publisher={IEEE}
}

@article{chen2024concept,
  title={The concept, technical architecture, applications and impacts of satellite internet: A systematic literature review},
  author={Chen, Yan and Ma, Xin and Wu, Chaonan},
  journal={Heliyon},
  volume={10},
  number={13},
  year={2024},
  publisher={Elsevier}
}

@ARTICLE{9461407,
  author={Lee, Yonghwa and Choi, Jihwan P.},
  journal={IEEE Transactions on Aerospace and Electronic Systems}, 
  title={Connectivity Analysis of Mega-Constellation Satellite Networks With Optical Intersatellite Links}, 
  year={2021},
  volume={57},
  number={6},
  pages={4213-4226},
  doi={10.1109/TAES.2021.3090914}
}

@inproceedings{kulu2024satellite,
  title={Satellite constellations—2024 survey, trends and economic sustainability},
  author={Kulu, Erik},
  booktitle={Proceedings of the International Astronautical Congress, IAC, Milan, Italy},
  pages={14--18},
  year={2024}
}

@article{gonzalo2014challenge,
  title={On the challenge of a century lifespan satellite},
  author={Gonzalo, Jes{\'u}s and Dom{\'\i}nguez, Diego and L{\'o}pez, Deibi},
  journal={Progress in Aerospace Sciences},
  volume={70},
  pages={28--41},
  year={2014},
  publisher={Elsevier}
}

@article{guo2025resilience,
  title={Resilience of Mega-Satellite Constellations: How Node Failures Impact Inter-Satellite Networking Over Time?},
  author={Guo, Binquan and Xiong, Zehui and Zhang, Zhou and Li, Baosheng and Niyato, Dusit and Yuen, Chau and Han, Zhu},
  journal={IEEE Transactions on Communications},
  year={2025},
  publisher={IEEE}
}

@article{xiaogang2016survey,
  title={A survey of routing techniques for satellite networks},
  author={Xiaogang, QI and Jiulong, Ma and Dan, Wu and Lifang, Liu and Shaolin, Hu},
  journal={Journal of communications and information networks},
  volume={1},
  number={4},
  pages={66--85},
  year={2016},
  publisher={PTP}
}

@inproceedings{sun2020improved,
  title={An Improved Routing Strategy Based on Virtual Topology in LEO Satellite Networks},
  author={Sun, Chaoran and Zhang, Yu and Zhu, Jian},
  booktitle={International Conference on Wireless and Satellite Systems},
  pages={240--250},
  year={2020},
  organization={Springer}
}

@article{lu2013virtual,
  title={Virtual topology for LEO satellite networks based on earth-fixed footprint mode},
  author={Lu, Yong and Sun, Fuchun and Zhao, Youjian},
  journal={IEEE communications letters},
  volume={17},
  number={2},
  pages={357--360},
  year={2013},
  publisher={IEEE}
}

@INPROCEEDINGS{10404740,
  author={Dong, Yunyao and Xu, Xiaofan and Zhang, Yueyue and Wang, Siming and Du, Ping and Xu, Du and Zhang, Xiaoning},
  booktitle={2023 International Conference on Wireless Communications and Signal Processing (WCSP)}, 
  title={A Novel Virtual Node-Based Multi-Controller Management Architecture for LEO Mega-Constellation Satellite Networks}, 
  year={2023},
  volume={},
  number={},
  pages={701-706},
  doi={10.1109/WCSP58612.2023.10404740}
}

@article{ekici2002distributed,
  title={A distributed routing algorithm for datagram traffic in LEO satellite networks},
  author={Ekici, Eylem and Akyildiz, Ian F and Bender, Michael D},
  journal={IEEE/ACM Transactions on networking},
  volume={9},
  number={2},
  pages={137--147},
  year={2002},
  publisher={IEEE}
}

@article{10436098,
  author={Lyu, Yifeng and Hu, Han and Fan, Rongfei and Liu, Zhi and An, Jianping and Mao, Shiwen},
  journal={IEEE Journal on Selected Areas in Communications}, 
  title={Dynamic Routing for Integrated Satellite-Terrestrial Networks: A Constrained Multi-Agent Reinforcement Learning Approach}, 
  year={2024},
  volume={42},
  number={5},
  pages={1204-1218},
  doi={10.1109/JSAC.2024.3365869}
}

@article{papapetrou2007distributed,
  title={Distributed on-demand routing for {LEO} satellite systems},
  author={Papapetrou, Evangelos and Karapantazis, Stylianos and Pavlidou, F-N},
  journal={Computer networks},
  volume={51},
  number={15},
  pages={4356--4376},
  year={2007},
  publisher={Elsevier}
}

@article{leyva2021inter,
  title={Inter-plane inter-satellite connectivity in dense LEO constellations},
  author={Leyva-Mayorga, Israel and Soret, Beatriz and Popovski, Petar},
  journal={IEEE Transactions on Wireless Communications},
  volume={20},
  number={6},
  pages={3430--3443},
  year={2021},
  publisher={IEEE}
}

@inproceedings{ron2025time,
  title={Time-Dependent Network Topology Optimization for LEO Satellite Constellations},
  author={Ron, Dara and Yusufzai, Faisal Ahmed and Kwakye, Sebastian and Roy, Satyaki and Sastry, Nishanth and Shah, Vijay K},
  booktitle={IEEE INFOCOM 2025-IEEE Conference on Computer Communications},
  pages={1--10},
  year={2025},
  organization={IEEE}
}

@inproceedings{mclaughlin2023grid,
  title={$\times$ grid: A location-oriented topology design for LEO satellites},
  author={Mclaughlin, Joseph and Choi, Jee and Durairajan, Ramakrishnan},
  booktitle={Proceedings of the 1st ACM Workshop on LEO Networking and Communication},
  pages={37--42},
  year={2023}
}

@inproceedings{Network-topology-design-at-27k-km/hour,
author = {Bhattacherjee, Debopam and Singla, Ankit},
title = {Network topology design at 27,000 km/hour},
year = {2019},
isbn = {9781450369985},
publisher = {Association for Computing Machinery},
address = {New York, NY, USA},
doi = {10.1145/3359989.3365407},
booktitle = {Proceedings of the 15th International Conference on Emerging Networking Experiments And Technologies},
pages = {341–354},
numpages = {14},
location = {Orlando, Florida},
series = {CoNEXT '19}
}

@article{hazan2016introduction,
  title={Introduction to online convex optimization},
  author={Hazan, Elad and others},
  journal={Foundations and Trends{\textregistered} in Optimization},
  volume={2},
  number={3-4},
  pages={157--325},
  year={2016},
  publisher={Now Publishers, Inc.}
}

@inproceedings{han2021dynamic,
  title={Dynamic routing for software-defined LEO satellite networks based on ISL attributes},
  author={Han, Zhenzhen and Zhao, Guofeng and Xing, Yuan and Sun, Nanbin and Xu, Chuan and Yu, Shui},
  booktitle={2021 IEEE global communications conference (GLOBECOM)},
  pages={1--6},
  year={2021},
  organization={IEEE}
}

@article{8688478,
    author={Li, Jian and Lu, Hancheng and Xue, Kaiping and Zhang, Yongdong},
    journal={IEEE Transactions on Vehicular Technology},
    title={Temporal Netgrid Model-Based Dynamic Routing in Large-Scale Small Satellite Networks},
    year={2019},
    volume={68},
    number={6},
    pages={6009-6021},
    doi={10.1109/TVT.2019.2910570}
}

@ARTICLE{taleb2009,
  author={Taleb, Tarik and Mashimo, Daisuke and Jamalipour, Abbas and Kato, Nei and Nemoto, Yoshiaki},
  journal={IEEE/ACM Transactions on Networking}, 
  title={Explicit Load Balancing Technique for NGEO Satellite IP Networks With On-Board Processing Capabilities}, 
  year={2009},
  volume={17},
  number={1},
  pages={281-293},
  doi={10.1109/TNET.2008.918084}
}

@article{ali2026leo,
  title={LEO Topology Design Under Real-World Deployment Constraints},
  author={Ali, Muaz and Zhang, Beichuan},
  journal={arXiv preprint arXiv:2602.07756},
  year={2026}
}

@article{wu2025enhancing,
  title={Enhancing LEO mega-constellations with inter-satellite links: Vision and challenges},
  author={Wu, Chenyu and Han, Shuai and Chen, Qian and Wang, Yu and Meng, Weixiao and Benslimane, Abderrahim},
  journal={IEEE Wireless Communications},
  year={2025},
  publisher={IEEE}
}

@inproceedings{bhattacharjee2023laser,
  title={Laser inter-satellite link setup delay: Quantification, impact, and tolerable value},
  author={Bhattacharjee, Dhiraj and Chaudhry, Aizaz U and Yanikomeroglu, Halim and Hu, Peng and Lamontagne, Guillaume},
  booktitle={2023 IEEE Wireless Communications and Networking Conference (WCNC)},
  pages={1--6},
  year={2023},
  organization={IEEE}
}

@article{norberto2026online,
  title={Online Learning for Dynamic Constellation Topologies},
  author={Norberto, Jo{\~a}o and Ferreira, Ricardo and Soares, Cl{\'a}udia},
  journal={arXiv preprint arXiv:2603.25954},
  year={2026}
}

@article{burke1996hoffman,
author = {Burke, James V. and Tseng, Paul},
title = {A Unified Analysis of Hoffman’s Bound via Fenchel Duality},
journal = {SIAM Journal on Optimization},
volume = {6},
number = {2},
pages = {265-282},
year = {1996},
doi = {10.1137/0806015}
}

@book{cesa2006prediction,
  title={Prediction, learning, and games},
  author={Cesa-Bianchi, Nicolo and Lugosi, G{\'a}bor},
  year={2006},
  publisher={Cambridge university press},
  doi={10.1017/CBO9780511546921}
}

@InProceedings{hazan2011stochastic,
  title={Beyond the regret minimization barrier: an optimal algorithm for stochastic strongly-convex optimization},
  author={Hazan, Elad and Kale, Satyen},
  booktitle={Proceedings of the 24th Annual Conference on Learning Theory},
  pages={421--436},
  year={2011},
  volume={19},
  series={Proceedings of Machine Learning Research},
  publisher={PMLR},
}

@article{nonhoff2026control,
title = {Online convex optimization for constrained control of nonlinear systems},
journal = {Automatica},
volume = {188},
pages = {112916},
year = {2026},
issn = {0005-1098},
doi = {doi.org/10.1016/j.automatica.2026.112916},
author = {Marko Nonhoff and Johannes Köhler and Matthias A. Müller}
}

@article{cover1991portfolio,
author = {Cover, Thomas M.},
title = {Universal Portfolios},
journal = {Mathematical Finance},
volume = {1},
number = {1},
pages = {1-29},
doi = {doi.org/10.1111/j.1467-9965.1991.tb00002.x},
year = {1991}
}

@inproceedings{zhang2018adaptive,
 author = {Zhang, Lijun and Lu, Shiyin and Zhou, Zhi-Hua},
 booktitle = {Advances in Neural Information Processing Systems},
 editor = {S. Bengio and H. Wallach and H. Larochelle and K. Grauman and N. Cesa-Bianchi and R. Garnett},
 publisher = {Curran Associates, Inc.},
 title = {Adaptive Online Learning in Dynamic Environments},
 url = {https://proceedings.neurips.cc/paper_files/paper/2018/file/10a5ab2db37feedfdeaab192ead4ac0e-Paper.pdf},
 volume = {31},
 year = {2018}
}

@inproceedings{zhao2020dynamic,
 author = {Zhao, Peng and Zhang, Yu-Jie and Zhang, Lijun and Zhou, Zhi-Hua},
 booktitle = {Advances in Neural Information Processing Systems},
 editor = {H. Larochelle and M. Ranzato and R. Hadsell and M.F. Balcan and H. Lin},
 pages = {12510--12520},
 publisher = {Curran Associates, Inc.},
 title = {Dynamic Regret of Convex and Smooth Functions},
 url = {https://proceedings.neurips.cc/paper_files/paper/2020/file/939314105ce8701e67489642ef4d49e8-Paper.pdf},
 volume = {33},
 year = {2020}
}

@inproceedings{cassel2022control,
 author = {Cassel, Asaf Benjamin and Peled-Cohen, Alon and Koren, Tomer},
 booktitle = {Advances in Neural Information Processing Systems},
 editor = {S. Koyejo and S. Mohamed and A. Agarwal and D. Belgrave and K. Cho and A. Oh},
 pages = {7410--7422},
 publisher = {Curran Associates, Inc.},
 title = {Rate-Optimal Online Convex Optimization in Adaptive Linear Control},
 url = {https://proceedings.neurips.cc/paper_files/paper/2022/file/30dfe47a3ccbee68cffa0c19ccb1bc00-Paper-Conference.pdf},
 volume = {35},
 year = {2022}
}

@inproceedings{gatmiry2023projection,
 author = {Gatmiry, Khashayar and Mhammedi, Zak},
 booktitle = {Advances in Neural Information Processing Systems},
 editor = {A. Oh and T. Naumann and A. Globerson and K. Saenko and M. Hardt and S. Levine},
 pages = {986--1008},
 publisher = {Curran Associates, Inc.},
 title = {Projection-Free Online Convex Optimization via Efficient Newton Iterations},
 url = {https://proceedings.neurips.cc/paper_files/paper/2023/file/03261886741f1f21f52f2a2d570616a2-Paper-Conference.pdf},
 volume = {36},
 year = {2023}
}

@inproceedings{schmidt2011inexact,
 author = {Schmidt, Mark and Roux, Nicolas and Bach, Francis},
 booktitle = {Advances in Neural Information Processing Systems},
 editor = {J. Shawe-Taylor and R. Zemel and P. Bartlett and F. Pereira and K. Weinberger},
 publisher = {Curran Associates, Inc.},
 title = {Convergence Rates of Inexact Proximal-Gradient Methods for Convex Optimization},
 url = {https://proceedings.neurips.cc/paper_files/paper/2011/file/8f7d807e1f53eff5f9efbe5cb81090fb-Paper.pdf},
 volume = {24},
 year = {2011}
}

@article{liang2016convergence,
  title={Convergence rates with inexact non-expansive operators},
  author={Liang, Jingwei and Fadili, Jalal and Peyr{\'e}, Gabriel},
  journal={Mathematical Programming},
  volume={159},
  number={1},
  pages={403--434},
  year={2016},
  publisher={Springer},
  doi={10.1007/s10107-015-0964-4}
}

@ARTICLE{dixit2019inexact,
  author={Dixit, Rishabh and Bedi, Amrit Singh and Tripathi, Ruchi and Rajawat, Ketan},
  journal={IEEE Transactions on Signal Processing}, 
  title={Online Learning With Inexact Proximal Online Gradient Descent Algorithms}, 
  year={2019},
  volume={67},
  number={5},
  pages={1338-1352},
  doi={10.1109/TSP.2018.2890368}
}

@article{hazan2007logarithmic,
  title={Logarithmic regret algorithms for online convex optimization},
  author={Hazan, Elad and Agarwal, Amit and Kale, Satyen},
  journal={Machine Learning},
  volume={69},
  number={2},
  pages={169--192},
  year={2007},
  publisher={Springer},
  doi={10.1007/s10994-007-5016-8}
}

@InProceedings{gradu2023control,
  title={Adaptive Regret for Control of Time-Varying Dynamics},
  author={Gradu, Paula and Hazan, Elad and Minasyan, Edgar},
  booktitle={Proceedings of The 5th Annual Learning for Dynamics and Control Conference},
  pages={560--572},
  year={2023},
  volume={211},
  series={Proceedings of Machine Learning Research},
  publisher={PMLR},
}

@inproceedings{zinkevich2003online,
  title={Online convex programming and generalized infinitesimal gradient ascent},
  author={Zinkevich, Martin},
  booktitle={Proceedings of the 20th international conference on machine learning (icml-03)},
  pages={928--936},
  year={2003}
}

@article{duchi2011adaptive,
author = {Duchi, John and Hazan, Elad and Singer, Yoram},
title = {Adaptive Subgradient Methods for Online Learning and Stochastic Optimization},
year = {2011},
issue_date = {2/1/2011},
publisher = {JMLR.org},
volume = {12},
number = {null},
issn = {1532-4435},
journal = {Journal of Machine Learning Research},
month = jul,
pages = {2121–2159},
numpages = {39}
}

@inproceedings{hazan2012projection,
author = {Hazan, Elad and Kale, Satyen},
title = {Projection-free online learning},
year = {2012},
isbn = {9781450312851},
publisher = {Omnipress},
address = {Madison, WI, USA},
booktitle = {Proceedings of the 29th International Coference on International Conference on Machine Learning},
pages = {1843–1850},
numpages = {8},
location = {Edinburgh, Scotland},
series = {ICML'12}
}

@inproceedings{hazan2020faster,
  title={Faster projection-free online learning},
  author={Hazan, Elad and Minasyan, Edgar},
  booktitle={Conference on Learning Theory},
  pages={1877--1893},
  year={2020},
  organization={PMLR}
}

@inproceedings{hazan2021boosting,
  title={Boosting for online convex optimization},
  author={Hazan, Elad and Singh, Karan},
  booktitle={International Conference on Machine Learning},
  pages={4140--4149},
  year={2021},
  organization={PMLR}
}

@article{bubeck2012regret,
  title={Regret analysis of stochastic and nonstochastic multi-armed bandit problems},
  author={Bubeck, S{\'e}bastien and Cesa-Bianchi, Nicolo},
  journal={Foundations and Trends{\textregistered} in Machine Learning},
  volume={5},
  number={1},
  pages={1--122},
  year={2012},
  publisher={Now Publishers, Inc.}
}

@book{HiriartUrruty1993,
  title = {Convex Analysis and Minimization Algorithms I},
  ISBN = {9783662027967},
  ISSN = {0072-7830},
  DOI = {10.1007/978-3-662-02796-7},
  journal = {Grundlehren der mathematischen Wissenschaften},
  publisher = {Springer Berlin Heidelberg},
  author = {Hiriart-Urruty,  Jean-Baptiste and Lemaréchal,  Claude},
  year = {1993}
}

@article{orabona2019modern,
  title={A modern introduction to online learning},
  author={Orabona, Francesco},
  journal={arXiv preprint arXiv:1912.13213},
  year={2019}
}

@book{boyd2004convex, 
    place={Cambridge}, 
    title={Convex Optimization}, 
    publisher={Cambridge University Press}, 
    author={Boyd, Stephen and Vandenberghe, Lieven}, 
    year={2004}
}

@book{horn2012matrix,
  title={Matrix analysis},
  author={Horn, Roger A and Johnson, Charles R},
  year={2012},
  publisher={Cambridge university press}
}

@article{boyd2011admm,
  title={Distributed optimization and statistical learning via the alternating direction method of multipliers},
  author={Boyd, Stephen and Parikh, Neal and Chu, Eric and Peleato, Borja and Eckstein, Jonathan},
  journal={Foundations and Trends{\textregistered} in Machine learning},
  volume={3},
  number={1},
  pages={1--122},
  year={2011},
  publisher={Emerald Publishing Limited}
}
